\newtheorem{theorem}{Theorem}
\newtheorem{lemma}{Lemma}
\begin{document}

\title{Rethinking Node-wise Propagation for Large-scale Graph Learning}

\author{Xunkai Li}
\email{cs.xunkai.li@gmail.com}
\affiliation{%
  \institution{Beijing Institute of Technology}
  \city{Beijing}
  \country{China}
}

\author{Jingyuan Ma}
\email{mjy20020227@163.com}
\affiliation{%
  \institution{Beijing Institute of Technology}
  \city{Beijing}
  \country{China}
}

\author{Zhengyu Wu}
\email{Jeremywzy96@outlook.com}
\affiliation{%
  \institution{Beijing Institute of Technology}
  \city{Beijing}
  \country{China}
}

\author{Daohan Su}
\email{bigkdstone@foxmail.com}
\affiliation{%
  \institution{Beijing Institute of Technology}
  \city{Beijing}
  \country{China}
}

\author{Wentao Zhang}
\email{wentao.zhang@pku.edu.cn}
\affiliation{%
  \institution{Peking University}
  \institution{National Engineering Laboratory for Big Data Analytics and Applications}
  \city{Beijing}
  \country{China}
}

\author{Rong-Hua Li}
\author{Guoren Wang}
\email{lironghuabit@126.com}
\email{wanggrbit@gmail.com}
\affiliation{%
  \institution{Beijing Institute of Technology}
  \city{Beijing}
  \country{China}
}

\renewcommand{\shortauthors}{Xunkai Li et al.}

\begin{abstract}
    Scalable graph neural networks (GNNs) have emerged as a promising technique, which exhibits superior predictive performance and high running efficiency across numerous large-scale graph-based web applications. 
    However,
    (i) Most scalable GNNs tend to treat all nodes with the same propagation rules, neglecting their topological uniqueness;
    (ii) Existing node-wise propagation optimization strategies are insufficient on web-scale graphs with intricate topology, where a full portrayal of nodes' local properties is required. 
    Intuitively, different nodes in web-scale graphs possess distinct topological roles, and therefore propagating them indiscriminately or neglect local contexts may compromise the quality of node representations. 
    To address the above issues, we propose \textbf{A}daptive \textbf{T}opology-aware \textbf{P}ropagation (ATP), which reduces potential high-bias propagation and extracts structural patterns of each node in a scalable manner to improve running efficiency and predictive performance. 
    Remarkably, ATP is crafted to be a plug-and-play node-wise propagation optimization strategy, allowing for offline execution independent of the graph learning process in a new perspective. 
    Therefore, this approach can be seamlessly integrated into most scalable GNNs while remain orthogonal to existing node-wise propagation optimization strategies.
    Extensive experiments on 12 datasets have demonstrated the effectiveness of ATP.
\end{abstract}


\begin{CCSXML}
<ccs2012>
   <concept>
       <concept_id>10010147.10010257.10010282.10011305</concept_id>
       <concept_desc>Computing methodologies~Semi-supervised learning settings</concept_desc>
       <concept_significance>500</concept_significance>
       </concept>
   <concept>
       <concept_id>10010147.10010257.10010293.10010294</concept_id>
       <concept_desc>Computing methodologies~Neural networks</concept_desc>
       <concept_significance>500</concept_significance>
       </concept>
 </ccs2012>
\end{CCSXML}

\ccsdesc[500]{Computing methodologies~Semi-supervised learning settings}
\ccsdesc[500]{Computing methodologies~Neural networks}

\keywords{Graph Neural Networks; Scalability; Semi-Supervised Learning}

\maketitle

\section{Introduction}
\label{sec: introdcution}
    Recently, the rapid growth of web-scale graph mining applications has driven needs for efficient analysis tools to tackle scalability challenges in the real world, including social analysis~\cite{sankar2021graph_social_1, zhang2022robust_graph_social_2, zhang2022graph_social_3} and e-commerce recommendations~\cite{xia2023app_gnn_rec1, yang2023app_gnn_rec2, cai2023app_gnn_rec3}.
    Scalable graph neural networks (GNNs), as a new machine learning paradigm for large-scale graphs, have inspired significant interests due to their higher efficiency than vanilla GNNs in node-level~\cite{wu2019sgc,Hu2021ahgae,gamlp}, edge-level~\cite{Zhang18link_prediction1, cai2021link_prediction2, tan2023link_prediction4}, and graph-level~\cite{zhang2019graph_classification1, yang2022graph_classification3, vincent2022graph_classification4} downstream tasks.

    Fundamentally, the core of GNN's scalability lies in the simplified aggregators or weight-free deep structural encoding.
    Therefore, existing scalable GNNs fall into two categories: 
    (i) Sampling-based methods~\cite{hamilton2017graphsage,chen2017vrgcn,huang2018asgcn,chiang2019clustergcn,zeng2019graphsaint} employ well-designed strategies to select suitable graph elements (e.g., nodes or edges) for computation-friendly message aggregators.
    Although they are effective, these approaches  are imperfect because they still face high communication costs in sampling and the sampling quality highly influences the performance. 
    As a result, many recent advancements achieve scalability by decoupling paradigm.
    (ii) Decouple-based methods~\cite{frasca2020sign,zhu2021ssgc,chen2020gbp,wang2021agp,feng2022grand+} treat weight-free feature propagation as pre-process and combine propagated results with reasonable learnable architectures to achieve efficient training. 
    For example, SGC~\cite{wu2019sgc} combines propagated node features with simple linear regression and achieves performance comparable to carefully designed GNNs.
    This relies on the homophily assumption~\cite{wu2020gnn_survey1,ma2021hete_gnn_survey1,song2022gnn_survey4}, where connected nodes share similar features and labels, thereby helping predict node labels.
    We refer to the feature propagation that does not distinguish between nodes as graph propagation.

    Despite their effectiveness, most of the aforementioned scalable GNNs fail to consider the unique roles played by each node in the topology. 
    Instead, they employ fixed propagation rules for computation.
    Therefore, there is still room for refining the granularity of graph propagation.
    To improve it, NDLS~\cite{zhang2021ndls} proposes node-wise propagation (NP), which quantifies the difference between the current propagated node features and the theoretically over-smoothed node features to enable custom propagation steps for each node. 
    NDM~\cite{huang2023nigcn} introduces an extra power parameter to extend the graph heat diffusion, separating the terminal time from the propagation steps for each node.
    SCARA~\cite{Liao2022scara} further extends NP by the feature-push operations, achieving attribute mining for each node.
    Despite offering practical NP strategies, these methods rely on spectral analysis and the generalized steady-state distribution of the fixed propagation operator to customize the rigid NP strategy from a global perspective.  
    Therefore, these methods often yield high-bias results due to over-reliance on the coarse propagation operator in web-scale graphs with intricate topology.
    Meanwhile, node classification on web-scale graphs heavily relies on the local node context (LNC), which refers to a general characterization of nodes based on their \textit{features}, \textit{positions} in the graph, and \textit{local topological structure}. 
    Regrettably, existing methods ignore this crucial factor.

\begin{figure}[t]   
	\centering
	\includegraphics[width=\linewidth,scale=1.00]{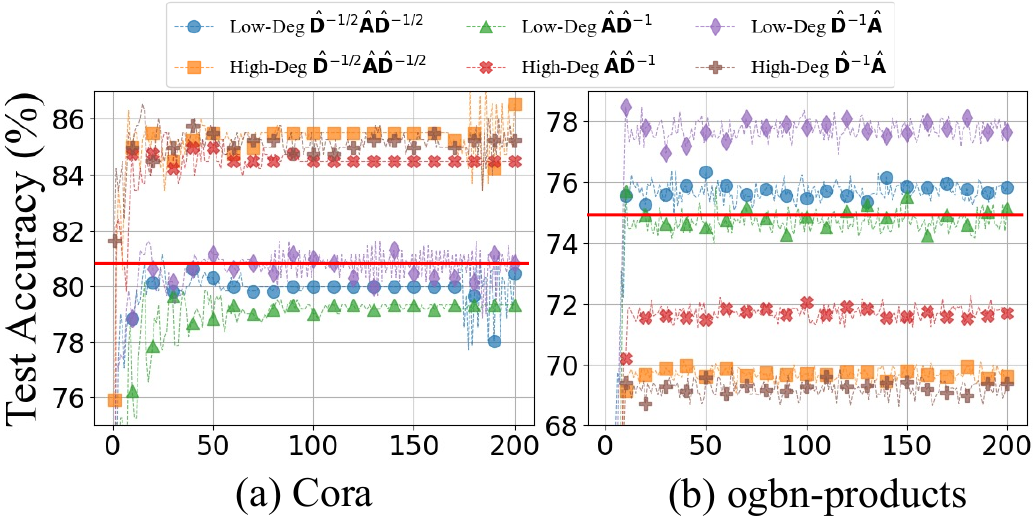}
	\caption{
    Performance in the Cora (2.7k nodes) and ogbn-products (2449k nodes).
    The x-axis is the training epoch.
    The red line denotes the baseline performance for all nodes.
    }
    \label{fig: empirical analysis exp_high_low}
\end{figure}

    To further illustrate, we utilize the node degree to represent the LNC, which directly influences the local connectivity of nodes. 
    Specifically, in Cora and ogbn-products, we classify nodes with degrees less than or equal to 3 and 5 as Low-Deg and other nodes as High-Deg, where Low-Deg at the graph's periphery with fewer connections and High-Deg located at the center of densely connected communities.
    Subsequently, in Fig.~\ref{fig: empirical analysis exp_high_low}, we use various propagation operators combined with 3-layer SGC to evaluate the predictive performance of nodes with different LNC (i.e., node degrees) in these two datasets.
    The related notations can be referred to Sec.~\ref{sec: Problem Formulation}.
    
    Intuitively, different propagation operators capture knowledge based on nodes' LNC from distinct perspectives during message passing, resulting in the different node representations for classification:
    (i) the symmetric normalization propagation operator $\hat{\mathbf{D}}^{-1/2}\hat{\mathbf{A}}\hat{\mathbf{D}}^{-1/2}$~\cite{kipf2016gcn} considers both current node and neighbors' LNC to perform unbiased message passing;
    (ii) the random walk-based propagation operator $\hat{\mathbf{D}}^{-1}\hat{\mathbf{A}}$~\cite{zeng2019graphsaint} only considers current node LNC, leading to a more inclusive knowledge acquisition from its neighbors without additional normalization;
    (iii) the reverse random walk-based operator $\hat{\mathbf{A}}\hat{\mathbf{D}}^{-1}$~\cite{xu2018jknet} only considers neighbors' LNC, enhancing the capacity to discriminate between neighbors to achieve fine-grained message aggregation. 
    The following analysis illustrates two key insights acquired through examining experimental results. 
    
    \textbf{Key Insight 1}: \textit{From the global perspective, we need to focus on High-Deg in web-scale scenarios to mitigate the negative impacts of high-bias propagation to ensure consistent performance.}
    As depicted in Fig.~\ref{fig: empirical analysis exp_high_low}, we observe that the Low-Deg performance remains consistent (same trends in three operators) and stable (similar performance to the red baseline) across two datasets.
    In contrast, the inconsistent and unstable High-Deg performance has prompted us to conduct a more in-depth analysis with different graph scales.

\vspace{0.08cm}

    Research on complex networks~\cite{chen2013dense_communities_large_scale_graph1,clauset2005dense_communities_large_scale_graph2} indicates that the topology of large-scale graphs is highly intricate, which results in the emergence of densely connected communities with indiscernible High-Deg. 
    Consequently, ogbn-products possesses more intricate and ambiguous LNC, misleading High-Deg during graph propagation. 
    This explains why considering the LNC of neighbors through $\hat{\mathbf{A}}\hat{\mathbf{D}}^{-1}$ can yield better High-Deg performance but worse than baseline.
    In contrast, the topology of small-scale Cora is relatively straightforward, enabling High-Deg to outperform the baseline by aggregating more favorable messages.
    This explains why the propagation operator $\hat{\mathbf{D}}^{-1/2}\hat{\mathbf{A}}\hat{\mathbf{D}}^{-1/2}$ is beneficial for predicting High-Deg, where both current node and its neighbors hold equal significance.

\vspace{0.08cm}
    \textbf{Key Insight 2}: \textit{From the local perspective, leveraging appropriate propagation operators across different scenarios to effectively capture relevant LNC can improve node predictive performance.}
    After analyzing High-Deg in graphs of different scales, we conduct a thorough examination of the roles played by different propagation operators in consistent performance trends observed for Low-Deg.
    As depicted in small-scale Cora, the success of $\hat{\mathbf{D}}^{-1}\hat{\mathbf{A}}$ in Low-Deg stems from its enhanced focus on aggregating neighbor features, which breaks potential feature sparsity issues caused by fewer neighbors.
    In contrast, $\hat{\mathbf{D}}^{-1/2}\hat{\mathbf{A}}\hat{\mathbf{D}}^{-1/2}$ and $\hat{\mathbf{A}}\hat{\mathbf{D}}^{-1}$ apply progressively enhanced normalization to propagated messages based on neighbors' LNC, thereby constraining the aggregation of knowledge from Low-Deg neighbors.
    This is also applicable to large-scale scenarios in Fig.~\ref{fig: empirical analysis exp_high_low}(b). 

\vspace{0.08cm}

    Motivated by the above key insights, in this paper, we propose \textbf{A}daptive \textbf{T}opology-aware \textbf{P}ropagation (ATP), which offers a plug-and-play solution for existing GNNs.
    Specifically, ATP first identifies potential high-bias propagation through graph propagation analysis and then employs a masking mechanism to regularize the node-wise propagation mechanisms (motivated by \textbf{Key Insight 1}). 
    After that, ATP employs a general encoding approach to represent node-dependent LNC without learning, which is then used to tailor propagation rules for each node (motivated by \textbf{Key Insight 2}).

    \textbf{Our contributions.}
    (1) \textit{\underline{New Perspective.}} 
    To the best of our knowledge, this work is the first to address the adverse impact of intricate topology in web-scale graph mining applications on the semi-supervised node classification paradigm, providing valuable empirical analysis.
    (2) \textit{\underline{New Method.}} 
    We propose ATP, a user-friendly and flexible NP optimization strategy tailored for most scalable GNNs, which is orthogonal to the existing optimization methods.
    (3) \textit{\underline{SOTA Performance.}}
    We conduct experiments on prevalent scalable GNNs and 12 benchmark datasets including the representative large-scale ogbn-papers100M. 
    Empirical results demonstrate that ATP has a significant positive impact on existing scalable GNNs (up to 4.96\% higher). 
    Furthermore, when combined with existing NP optimization strategies, it exhibits a complementary effect, resulting in additional performance gains.

\section{PRELIMINARIES}
\label{sec: PRELIMINARIES}
\subsection{Problem Formulation}
\label{sec: Problem Formulation}
    Consider a graph $\mathcal{G} = (\mathcal{V}, \mathcal{E})$ with $|\mathcal{V}|=n$ nodes and $|\mathcal{E}|=m$ edges, the adjacency matrix (including self-loops) is $\hat{\mathbf{A}}\in\mathbb{R}^{n\times n}$, the feature matrix is $\mathbf{X} = \{x_1,\dots,x_n\}$ in which $x_v\in\mathbb{R}^{f}$ represents the feature vector of node $v$, and $f$ represents the dimension of the node attributes.
    Besides, $\mathbf{Y} = \{y_1,\dots,y_n\}$ is the label matrix, where $y_v\in\mathbb{R}^{|\mathcal{Y}|}$ is a one-hot vector and $|\mathcal{Y}|$ represents the number of the classes.
    The semi-supervised node classification task is based on the topology of labeled set $\mathcal{V}_L$ and unlabeled set $\mathcal{V}_U$, and the nodes in $\mathcal{V}_U$ are predicted with the model supervised by $\mathcal{V}_L$.

\subsection{Scalable Graph Neural Networks}
    Motivated by the spectral graph theory and deep neural networks, GCN~\cite{kipf2016gcn} simplifies the topology-based convolution operator~\cite{2013firstgnn} by the first-order approximation of Chebyshev polynomials~\cite{kabal1986cheby_spectral_graph}.
    The forward propagation of the $l$-th layer in GCN is formulated as
    \begin{equation}
        \label{eq:gcn}
        \mathbf{X}^{(l)} = \sigma(\tilde{\mathbf{A}}\mathbf{X}^{(l-1)}\mathbf{W}^{(l)}),\;\tilde{\mathbf{A}} = \hat{\mathbf{D}}^{-1/2}\hat{\mathbf{A}}\hat{\mathbf{D}}^{-1/2},
    \end{equation}
    where $\hat{\mathbf{D}}$ represents the degree matrix of $\hat{\mathbf{A}}$, $\mathbf{W}$ represents the trainable weights, and $\sigma(\cdot)$ represents the non-linear activation function.
    Intuitively, GCN aggregates the neighbors’ representation.
    Such a simple paradigm is proved to be effective in various graph-based downstream tasks~\cite{fout2017gcn_protein,yao2019gcn_text_classification,jin2020gcn_recommendation}.
    However, GCN suffers from severe scalability issues since it executes the feature propagation and transformation recursively and is trained in a full-batch manner.
    To avoid the recursive neighborhood over expansion, sampling and decouple-based approaches have been investigated.

    \textbf{Sampling-based methods.}
    Regarding node-level sampling techniques, GraphSAGE~\cite{hamilton2017graphsage} employs random selection to extract a fixed-size set of neighbors for computation. 
    VR-GCN~\cite{chen2017vrgcn} delves into node variance reduction, achieving size reduction in samples at the expense of additional memory usage.
    For layer-level sampling, FastGCN~\cite{chen2018fastgcn} proposes importance-based neighbor selection to minimize sampling variance. 
    AS-GCN~\cite{huang2018asgcn} introduces an adaptive layer-level sampling method for explicit variance reduction.  
    Similarly, LADIES~\cite{zou2019LADIES} adheres to layer constraints, crafting a neighbor-dependent and importance-based sampling approach.
    As for the graph-level sampling strategies, Cluster-GCN~\cite{chiang2019clustergcn} initially clusters nodes before extracting nodes from clusters, while GraphSAINT~\cite{zeng2019graphsaint} directly samples subgraphs for mini-batch training.
    GraphCoarsening~\cite{huang2021GraphCoarsening} creates a coarse-grained graph for training.
    ShaDow~\cite{zeng2021ShaDow} first extracts a local subgraph for target entities and employs GNN of arbitrary depth on the subgraph.

    \textbf{Decouple-based methods.}
    Recent studies~\cite{zhang2022pasca} have observed that non-linear feature transformation contributes little to performance as compared to graph propagation. 
    Thus, a new direction for scalable GNN is based on the SGC~\cite{wu2019sgc}, which reduces GNNs into a linear model operating on $k$-layer propagated features
    \begin{equation}
        \label{eq:sgc}
        \begin{aligned}
        \mathbf{X}^{(k)}=\tilde{\mathbf{A}}^k \mathbf{X}^{(0)},\;\mathbf{Y}=\operatorname{softmax}\left(\mathbf{W} \mathbf{X}^{(k)}\right),
        \end{aligned}
    \end{equation}
    where $\mathbf{X}^{(0)}=\mathbf{X}$ and $\mathbf{X}^{(k)}$ is the $k$-layer propagated features.
    As the propagated features $\mathbf{X}^{(k)}$ can be precomputed, SGC is easy to scale to large graphs.
    Inspired by it, SIGN~\cite{frasca2020sign} proposes to concatenate the learnable propagated features $\left[\mathbf{X}^{(0)} \mathbf{W}_0, \ldots, \mathbf{X}^{(k)} \mathbf{W}_k\right]$.
    S$^2$GC~\cite{zhu2021ssgc} proposes to average the propagated results from the perspective of spectral analysis $\mathbf{X}^{(k)}=\sum_{l=0}^k \tilde{\mathbf{A}}^l \mathbf{X}^{(0)}$.
    GBP~\cite{chen2020gbp} utilizes the $\beta$ weighted manner $\mathbf{X}^{(k)}=\sum_{l=0}^k w_l \tilde{\mathbf{A}}^l \mathbf{X}^{(0)}, w_l=\beta(1-\beta)^l$.
    GAMLP~\cite{gamlp} achieves information aggregation based on the attention mechanisms ${\mathbf{X}}^{(k)}=\tilde{\mathbf{A}}^k\mathbf{X}^{(0)} \| \sum_{l=0}^{k-1} w_l \mathbf{X}^{(l)}$, where attention weight $w_l$ has multiple calculation versions.
    GRAND+~\cite{feng2022grand+} proposes a generalized forward push propagation algorithm to obtain $\tilde{\mathbf{P}}$, which is used to approximate $k$-order PageRank weighted $\tilde{\mathbf{A}}$ with higher flexibility and efficiency.
    Then it obtains propagated results $\tilde{\mathbf{X}}= \tilde{\mathbf{P}}\mathbf{W}\mathbf{X}^{(0)}$ with $\tilde{\mathbf{P}}$-based data augmentation and learnable $\mathbf{W}$.

\textbf{Node-wise Propagation Optimization Strategies.}
    Despite the aforementioned scalable GNNs utilizing computation-friendly message aggregators or decoupling paradigms to extend learnable architectures to web-scale graphs, the majority of existing methods still adhere to fixed propagation rules. 
    This approach, which does not discriminate between nodes, inadvertently overlooks the uniqueness of each node within the propagation.
    Hence, recent studies have introduced fine-grained $k$-step NP optimization strategies to improve the predictive performance of scalable GNNs.
    The optimization paradigm can be formally expressed as
    \begin{equation}
        \label{eq:node_wise_propagation}
        \begin{aligned}
        &\tilde{\mathbf{X}}=\sum_{l=0}^{k}\mathbf{\Pi}\cdot\mathbf{L}\cdot\mathbf{H}\cdot\mathbf{X}\left[\text{row},\text{col}\right],\; \mathbf{\Pi}=\sum_{i=0}^l w_i\cdot\left(\hat{\mathbf{D}}^{r-1}\hat{\mathbf{A}}\hat{\mathbf{D}}^{-r}\right)^i,\\
        &\mathbf{L}=\operatorname{Diag}\left\{\mathbb{I}\left[l_u\right]:\forall u\in\mathcal{V}\right\},\;k=\max\left\{l_u,\forall u\in\mathcal{V}\right\},\\
        &\mathbf{H}= \frac{\omega^l}{\left(l!\right)^\rho\cdot C},\;C=\sum_{l=0}^\infty\frac{\omega^l}{\left(l!\right)^\rho} \leftarrow e^{-\omega}\frac{\omega^l}{l!} \approx e^{-t}\sum_{i=0}^\infty\frac{t^i}{i!},
        \end{aligned}
    \end{equation}
    where propagated feature $\tilde{\mathbf{X}}$ is obtained by the various NP optimization perspectives (i.e., $\mathbf{L}$, $\mathbf{H}$, $\mathbf{\Pi}$, and $\mathbf{X}\left[\text{row},\text{col}\right]$).
    In other words, the NP optimization perspectives are diverse.
    NDLS~\cite{zhang2021ndls} and NDM~\cite{huang2023nigcn} customize node-wise propagation step,  where $l_u$ represents the propagation step for node $u$, while $\mathbf{L}$ denotes the diagonal matrix composed of indicator vectors $\mathbb{I}$, used to compute the appropriate propagation results, i.e., $\mathbb{I}\left[l_u\right]=1$ if $l\leq l_u\leq k$ and $\mathbb{I}\left[l_u\right]=0$ otherwise.
    As we know, by solving the differential function of Graph Heat Equation $\mathbf{H}$ at time $t$ defined by~\cite{chung2005spectral_graph_magnetic_laplacian1}, GDC~\cite{10.5555/gdc} and DGC~\cite{wang2021dgc} obtain the underlying Heat Kernel PageRank parameterized by $\omega$ for fine-grained NP optimization.
    Notably, we focus solely on describing the heat kernel function used for propagation, omitting the node features $\mathbf{X}$. 
    Additionally, for the sake of reader-friendliness, we present $\mathbf{H}$ and $\mathbf{\Pi}$ in a decoupled manner.
    Building upon this, NDM introduces normalization factor $C$ and power parameter $\rho$ to improve its expressiveness and generalizability, which can control change tendency for general purposes.
    Furthermore, SCARA~\cite{Liao2022scara} achieves the discovery of potential correlations between nodes by performing fine-grained feature-push operations, transforming the computation entities from $\mathbf{X}\left[\text{row},:\right]$ to $\mathbf{X}\left[:,\text{col}\right]$.

    Despite their effectiveness, essential propagation rules are ignored.
    Specifically, $\mathbf{\Pi}$ is the unified graph propagation equation, which serves as an effective paradigm to model various node proximity measures and basic GNN propagation formulas (i.e., $\tilde{\mathbf{A}}\mathbf{W}^{(l)}$ in GCN and $\tilde{\mathbf{A}}^l$ in SGC). 
    For a given node $u$, a node proximity query yields $\mathbf{\Pi}(v)$ that represents the importance of $v$ with respect to $u$. 
    It captures intricate structural insights from the $l$-hop neighbors, which is guided by weight sequence $w_i$ and probabilities obtained from a $l$-step propagation that originates from a source node $u$ and extends to every node within the graph.
    More deeply, the propagation kernel coefficient $r \in [0, 1]$ not only affects transport probabilities during the propagation for modeling node proximity but also captures pivotal LNC knowledge detailed in Sec.~\ref{sec: introdcution} (three propagation operators obtained by setting $r=0$, $r=1/2$, $r=1$).

\section{ATP FRAMEWORK}
\label{sec: ATP}
    As a plug-and-play node-wise propagation optimization strategy, the computation of ATP is independent of the graph learning and remains orthogonal to existing NP methods. 
    It commences by employing a masking mechanism for correcting potential high-bias propagation from a global perspective.
    Then, ATP represents the LNC to custom propagation rules for each node in a weight-free manner from a local perspective.
    Based on this, ATP serves to curtail redundant computations and provides performance gains by high-bias propagation correction and LNC encoding for existing scalable GNNs.
    The complete algorithm can be referred to as Algorithm~\ref{alg: atp}.

\subsection{High-bias Propagation Correction}
\label{sec: High-bias Propagation Correction}
\noindent\textbf{Propagation Operator.}
    For existing GNNs, numerous variations of the Laplacian matrix have been widely employed as propagation operators, where $\mathbf{P}=\hat{\mathbf{D}}^{-1}\hat{\mathbf{A}}$ stands out due to its intuitive and explainable nature. 
    Let $1=\lambda_1 \geq \lambda_2 \geq \ldots \geq \lambda_n>-1$ be the eigenvalues of $\mathbf{P}$.
    Suppose the graph is connected, for any initial distribution $\pi_0$, let $\tilde{\pi}\left(\pi_0\right)=\lim_{k\to\infty}\pi_0\mathbf{P}^k$, where $\tilde{\pi}\left(\pi_0\right)$ represents the stable state under infinite propagation.
    Then according to~\cite{dihe201Propagation_matrix_initial_distribution}, we have $\tilde{\pi}_i=\tilde{\pi}\left(\pi_0\right)_i=\frac{1}{n}\sum_{j=1}^n\mathbf{P}_{ji}$, where $\tilde{\pi}_i$ is the $i$-th component.
    If $\mathbf{P}$ is not connected, we can divide $\mathbf{P}$ into connected blocks.
    Then for each blocks ${B}_g$, there always be $\tilde{\pi}\left(\pi_0\right)_i=\frac{1}{n_g}\sum_{j\in B_g}\mathbf{P}_{ji}\cdot\sum_{j\in B_g}\left(\pi_0\right)_j$, where $n_g$ is the number of nodes in $B_g$.
    To make the following derivation more reader-friendly, we assume $\mathbf{P}$ is connected.
    Therefore, $\tilde{\pi}$ is independent to $\pi_0$, thus we replace $\tilde{\pi}\left(\pi_0\right)$ by $\tilde{\pi}$.
    To investigate the fine-grained graph propagation, we have the following lemmas

\begin{lemma}
\label{lemma: second_large_eigenvalue}
    The difference between the stable state and $k$-step propagated results represents the upper bound of the convergence rate.
\begin{equation}
\label{eq: convergence rate with eigenvalue and degree}
    \begin{aligned}
    \left|\left(\mathbf{P}^ke_i\right)_j-\tilde{\pi}_j\right|\le\sqrt{\frac{\tilde{d}_j}{\tilde{d}_i}}\lambda_2^k,
    \end{aligned}
\end{equation}
    where $\tilde{d}$ denotes the degree of node plus 1 (to include itself by self-loop).
\end{lemma}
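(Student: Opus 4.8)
The plan is to diagonalize $\mathbf{P}$ through its symmetric conjugate and then peel the stationary term off the spectral expansion, bounding the remainder by the spectral gap. First I would exploit the similarity $\mathbf{P}=\hat{\mathbf{D}}^{-1/2}\mathbf{S}\hat{\mathbf{D}}^{1/2}$ with $\mathbf{S}=\hat{\mathbf{D}}^{-1/2}\hat{\mathbf{A}}\hat{\mathbf{D}}^{-1/2}$, which is real symmetric and hence admits an orthonormal eigenbasis $\{\phi_l\}_{l=1}^n$ sharing the eigenvalues $1=\lambda_1\ge\lambda_2\ge\cdots\ge\lambda_n>-1$ with $\mathbf{P}$. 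A direct check shows the top eigenvector is $(\phi_1)_i=\sqrt{\tilde{d}_i/\mathrm{vol}}$ with $\mathrm{vol}=\sum_k\tilde{d}_k$, since $\hat{\mathbf{A}}\mathbf{1}$ returns the degree vector.

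Next I would write $\mathbf{P}^k=\hat{\mathbf{D}}^{-1/2}\mathbf{S}^k\hat{\mathbf{D}}^{1/2}$ and read off the entry, interpreting $(\mathbf{P}^k e_i)_j$ as the probability mass at $j$ after $k$ steps from $i$:
\begin{equation}
    \left(\mathbf{P}^k e_i\right)_j=\sqrt{\frac{\tilde{d}_j}{\tilde{d}_i}}\sum_{l=1}^n\lambda_l^k(\phi_l)_i(\phi_l)_j.
\end{equation}
The $l=1$ term contributes $\sqrt{\tilde{d}_j/\tilde{d}_i}\cdot\sqrt{\tilde{d}_i\tilde{d}_j}/\mathrm{vol}=\tilde{d}_j/\mathrm{vol}$, which is precisely the stationary component $\tilde{\pi}_j$ identified in the preceding discussion. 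Subtracting it isolates the transient part
\begin{equation}
    \left(\mathbf{P}^k e_i\right)_j-\tilde{\pi}_j=\sqrt{\frac{\tilde{d}_j}{\tilde{d}_i}}\sum_{l=2}^n\lambda_l^k(\phi_l)_i(\phi_l)_j.
\end{equation}

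Finally I would bound the tail: factoring out the geometric rate gives $\bigl|\sum_{l\ge2}\lambda_l^k(\phi_l)_i(\phi_l)_j\bigr|\le\lambda_2^k\sum_{l\ge2}|(\phi_l)_i|\,|(\phi_l)_j|$, and one application of Cauchy--Schwarz together with the orthonormality identity $\sum_{l=1}^n(\phi_l)_i^2=1$ caps the residual sum by $1$, yielding the claimed $\sqrt{\tilde{d}_j/\tilde{d}_i}\,\lambda_2^k$. The disconnected case is dispatched by running this argument block-by-block, as already noted before the lemma.

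The step I expect to be the main obstacle is this tail bound, because pulling out $\lambda_2^k$ silently uses $\lambda_2=\max_{l\ge2}|\lambda_l|$; if some negative eigenvalue near $\lambda_n$ had larger magnitude than $\lambda_2$, the honest rate would be $\max(|\lambda_2|,|\lambda_n|)^k$ rather than $\lambda_2^k$. The self-loops folded into $\hat{\mathbf{A}}$ (which force $\lambda_n>-1$ and shift the spectrum upward) are what make $\lambda_2$ the dominant subdominant mode, so I would lean on that self-loop regularization to justify replacing $\max_{l\ge2}|\lambda_l|$ by $\lambda_2$.
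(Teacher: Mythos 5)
Your proof is correct in substance, but it is worth knowing that the paper does not actually prove this lemma at all: in the proof of Theorem~\ref{theorem: high-bias propagation} it is simply imported with the line ``According to~\cite{chung1997SGT}, we have Lemma~\ref{lemma: second_large_eigenvalue}.'' What you have written is precisely the standard argument behind that citation --- conjugate $\mathbf{P}$ to the symmetric operator $\mathbf{S}=\hat{\mathbf{D}}^{-1/2}\hat{\mathbf{A}}\hat{\mathbf{D}}^{-1/2}$, expand in its orthonormal eigenbasis, identify the Perron term $(\phi_1)_i=\sqrt{\tilde{d}_i/\mathrm{vol}}$ with the stationary component, and kill the tail with Cauchy--Schwarz and $\sum_l(\phi_l)_i^2=1$ --- so you are supplying the proof the paper outsources. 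Three remarks. First, your closing worry is legitimate: the honest rate is $\max_{l\ge 2}|\lambda_l|^k$, and writing $\lambda_2^k$ presupposes $|\lambda_n|\le\lambda_2$. Your appeal to the self-loops is only heuristic --- adding one self-loop per node shifts the spectrum but does not in general force $|\lambda_n|\le\lambda_2$ (that would require a lazy walk $\tfrac12(\mathbf{I}+\mathbf{P})$ or similar); however, this imprecision is already in the lemma's statement and in the cited source's common usage, so it is not a defect you introduced. Second, a notational wrinkle: with the row-stochastic $\mathbf{P}=\hat{\mathbf{D}}^{-1}\hat{\mathbf{A}}$, the literal product satisfies $(\mathbf{P}^ke_i)_j=(\mathbf{P}^k)_{ji}=\sqrt{\tilde{d}_i/\tilde{d}_j}\sum_l\lambda_l^k(\phi_l)_j(\phi_l)_i$; your expansion with the prefactor $\sqrt{\tilde{d}_j/\tilde{d}_i}$ is really for $(e_i^{\top}\mathbf{P}^k)_j$, i.e.\ the walk started \emph{at} $i$. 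That is clearly what the lemma intends (it is the only reading under which the $\sqrt{\tilde{d}_j/\tilde{d}_i}$ factor and the subtraction of $\tilde{\pi}_j$ come out right), so you resolved the paper's ambiguity correctly, but you should state the transpose convention explicitly. Third, the stationary value $\tilde{\pi}_j=\tilde{d}_j/\mathrm{vol}$ that falls out of your $l=1$ term is the mathematically correct one; note that it agrees with the paper's earlier formula $\tilde{\pi}_i=\frac{1}{n}\sum_j\mathbf{P}_{ji}$ only for regular graphs, which is a discrepancy on the paper's side, not yours.
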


\begin{lemma}
\label{lemma: spectralgap}
    For a graph $\mathcal{G=\left(V,E\right)}$ with the average degree $d_{\mathcal{G}}$, we have $1-\Delta_\lambda=O\left(1/\sqrt{d_{\mathcal{G}}}\right)$, where $\Delta_\lambda$ is the spectral gap of $\mathcal{G}$.
\end{lemma}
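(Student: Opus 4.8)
The plan is to first unpack the definition. Since $\mathbf{P}=\hat{\mathbf{D}}^{-1}\hat{\mathbf{A}}$ is a stochastic matrix on a connected graph, its top eigenvalue is $\lambda_1=1$ and the spectral gap is $\Delta_\lambda=\lambda_1-\lambda_2=1-\lambda_2$ (or, in the absolute version, $1-\max_{i\ge 2}|\lambda_i|$). Hence $1-\Delta_\lambda=\lambda_2$, and the claim reduces to showing $\lambda_2=O(1/\sqrt{d_{\mathcal{G}}})$, i.e.\ the second-largest transition eigenvalue decays like the inverse square root of the average degree. This already dovetails with Lemma~\ref{lemma: second_large_eigenvalue}, whose convergence rate $\lambda_2^k$ is exactly what the present bound would quantify.

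Next I would pass to a symmetric representation. The matrix $\mathbf{P}$ is similar to $\mathcal{A}=\hat{\mathbf{D}}^{-1/2}\hat{\mathbf{A}}\hat{\mathbf{D}}^{-1/2}$ via $\hat{\mathbf{D}}^{1/2}\mathbf{P}\hat{\mathbf{D}}^{-1/2}=\mathcal{A}$, so the two share a spectrum and I may analyze the symmetric $\mathcal{A}$ instead. Writing the normalized Laplacian $\mathcal{L}=\mathbf{I}-\mathcal{A}$, its eigenvalues are $\mu_i=1-\lambda_i$, with $\mu_1=0$ attained by the Perron vector $\hat{\mathbf{D}}^{1/2}\mathbf{1}$. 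Controlling $\lambda_2$ is therefore equivalent to showing that every nonzero Laplacian eigenvalue clusters around $1$ to within $O(1/\sqrt{d_{\mathcal{G}}})$, that is $\max_{i\ge 2}|1-\mu_i|=O(1/\sqrt{d_{\mathcal{G}}})$.

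The core estimate comes from random-matrix concentration for the normalized adjacency operator. Modeling the web-scale topology as a graph with an expected degree sequence whose (weighted) average is $d_{\mathcal{G}}$, I would invoke the semicircle-law / Chung--Lu--Vu bound: once the typical degree grows, the bulk eigenvalues of $\mathcal{A}$ concentrate and the nontrivial ones satisfy $\max_{i\ge 2}|\lambda_i|\le (1+o(1))\,c/\sqrt{d_{\mathcal{G}}}$. Concretely, I would subtract the rank-one Perron contribution and apply the trace (moment) method to $\mathcal{A}$ on the orthogonal complement of $\hat{\mathbf{D}}^{1/2}\mathbf{1}$: the $2k$-th moment $\sum_{i\ge 2}\lambda_i^{2k}$ is bounded by counting closed walks of length $2k$, each weighted by factors $1/\sqrt{d_{i}d_{j}}\approx 1/d_{\mathcal{G}}$, so the dominant tree-like walks contribute on the order of $d_{\mathcal{G}}^{-k}$ times a combinatorial (Catalan) factor. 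Taking $k$ large isolates $\lambda_2^{2k}$ and yields $\lambda_2\le O(1)\cdot d_{\mathcal{G}}^{-1/2}$; substituting back through $\Delta_\lambda=1-\lambda_2$ completes the proof.

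I expect the main obstacle to be the global control of $\lambda_2$ itself: the bound is false for arbitrary graphs (a barbell or two-clique graph has $\lambda_2$ close to $1$ while keeping $d_{\mathcal{G}}$ bounded), so the argument must rest on an expansion or random-graph hypothesis consistent with the intricate, densely connected topology described in Sec.~\ref{sec: introdcution}. Making the concentration quantitative---precisely tracking the walk counts in the moment method, absorbing degree irregularity through the symmetric normalization, and discharging the $o(1)$ error terms---is where the real work lies, whereas the reduction to $\lambda_2$ and the similarity transform to $\mathcal{A}$ are routine.
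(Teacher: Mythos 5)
The paper does not actually prove this lemma: inside the proof of Theorem~\ref{theorem: high-bias propagation} it is introduced with the single sentence ``According to~\cite{chung1997SGT}, we have Lemma~\ref{lemma: spectralgap}'' and no argument is given, so there is no in-paper derivation to match your sketch against. Your reconstruction is nevertheless the right provenance for the claim: reducing $1-\Delta_\lambda$ to $\lambda_2$, passing to the symmetric $\hat{\mathbf{D}}^{-1/2}\hat{\mathbf{A}}\hat{\mathbf{D}}^{-1/2}$, subtracting the rank-one Perron part, and bounding the remainder by the trace/moment method is exactly how the $O\left(1/\sqrt{d_{\mathcal{G}}}\right)$ spectral bound is obtained in the Chung--Lu--Vu line of work on random graphs with given expected degrees (where, strictly, the relevant quantity is the second-order average degree and a minimum-expected-degree condition is required). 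Your central observation --- that the statement is simply false for arbitrary graphs and must rest on a random-graph or expansion hypothesis --- is correct and identifies a real gap in the lemma as stated: the paper quotes the conclusion while silently dropping the hypotheses under which it holds, and the cited 1997 monograph contains, if anything, the Alon--Boppana-type \emph{lower} bound $\lambda_2=\Omega\left(1/\sqrt{d}\right)$ for regular graphs rather than the upper bound used here.

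One small correction to your counterexample: two bounded-degree expanders joined by an edge keep $d_{\mathcal{G}}$ bounded, in which case $O\left(1/\sqrt{d_{\mathcal{G}}}\right)=O(1)$ and the bound is vacuously satisfied. To actually contradict the lemma you need $d_{\mathcal{G}}\to\infty$ while $\lambda_2\to 1$, e.g.\ two cliques of size $k$ joined by a single edge, for which $d_{\mathcal{G}}\approx k$ but $\lambda_2\approx 1 \neq O\left(1/\sqrt{k}\right)$. With that fix, your diagnosis stands: the lemma should be stated for the random-graph model (or under an explicit expansion assumption), and the heavy lifting is the concentration argument you outline, not anything contained in the paper.
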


    \noindent\textbf{Global Graph Propagation.}
    Fundamentally, the core of graph propagation is the trade-off between the node-wise optimal convergence diameters and over-smoothing. 
    This optimal convergence diameter indicates the receptive field required for generating the most effective node representations, whereas exceeding this range would lead to negative impacts due to over-smoothing.
    While some methods propose node-adaptive $k$ for aggregating valuable information within $k$-hop neighbors, there are other pivotal factors that play significant roles in achieving convergence.
    Therefore, we adopt $k$-step propagation for all nodes and analyze the varying propagation states from a global perspective to obtain the Theorem~\ref{theorem: high-bias propagation}.

\begin{theorem}
\label{theorem: high-bias propagation}
    The upper bound on the convergence rate of $k$-step graph propagation hinges on quantifying the discrepancy between the current state and the stable state, which is defined as
\begin{equation}
\label{eq: smoothing difference in theorem}
    \begin{aligned}
    \left|\left|\tilde{\pi}-\pi_i(k)\right|\right|_2\le\sqrt{\frac{2m+n}{\tilde{d}_i}}\lambda_2^k,
    \end{aligned}
\end{equation}
    where the pivotal factors in striking a balance between effective convergence and over-smoothing are the High-Deg in large-scale graphs.
\end{theorem}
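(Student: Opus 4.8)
The plan is to obtain the $L_2$ bound directly from the coordinatewise estimate already supplied by Lemma~\ref{lemma: second_large_eigenvalue}, so no new spectral machinery is needed. First I would fix the convention that $\pi_i(k)$ is the $k$-step propagated distribution seeded at node $i$, i.e.\ the column vector $\mathbf{P}^k e_i$, so that its $j$-th entry is exactly $(\mathbf{P}^k e_i)_j$ and the $j$-th entry of the deviation $\tilde{\pi}-\pi_i(k)$ is precisely the quantity controlled in Eq.~\eqref{eq: convergence rate with eigenvalue and degree}. Expanding the norm as $\|\tilde{\pi}-\pi_i(k)\|_2^2=\sum_{j=1}^n\big((\mathbf{P}^k e_i)_j-\tilde{\pi}_j\big)^2$ then reduces the whole statement to summing the squared per-coordinate bounds.

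Next I would square Lemma~\ref{lemma: second_large_eigenvalue} coordinatewise to get $\big((\mathbf{P}^k e_i)_j-\tilde{\pi}_j\big)^2\le (\tilde{d}_j/\tilde{d}_i)\,\lambda_2^{2k}$ and sum over $j$. Because the factor $\lambda_2^{2k}/\tilde{d}_i$ is independent of $j$, it pulls out of the sum, leaving $\sum_{j=1}^n\tilde{d}_j$ inside. The single genuinely arithmetic step is the degree-sum identity: since each non-self-loop edge contributes $2$ to the total degree and the self-loop adds $1$ per node, $\sum_{j=1}^n\tilde{d}_j=\sum_{j=1}^n(d_j+1)=2m+n$. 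Substituting this and taking a square root yields the claimed prefactor $\sqrt{(2m+n)/\tilde{d}_i}$ multiplying $\lambda_2^k$, completing the inequality.

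Finally, to justify the concluding clause, I would read off the monotone dependence of the prefactor $\sqrt{(2m+n)/\tilde{d}_i}$ on its two ingredients. The numerator $2m+n$ grows with graph size, so on web-scale graphs the bound is loose unless the denominator compensates; since the prefactor is decreasing in $\tilde{d}_i$, the High-Deg nodes attain the tightest bound and therefore converge fastest toward the stable, over-smoothed state $\tilde{\pi}$. This singles out High-Deg nodes in large graphs as the dominant contributors to the convergence/over-smoothing trade-off, which is exactly the motivation for the subsequent masking correction.

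I do not expect a deep obstacle here: given Lemma~\ref{lemma: second_large_eigenvalue}, the derivation is a short termwise summation rather than an application of Cauchy--Schwarz. The only points requiring care are (i) fixing the orientation convention so that Lemma~\ref{lemma: second_large_eigenvalue} applies verbatim to $\pi_i(k)$ as defined, and (ii) the bookkeeping in the degree-sum identity once self-loops are folded into $\hat{\mathbf{A}}$, i.e.\ ensuring $m$ counts the original edges so that $\sum_{j}\tilde{d}_j=2m+n$ rather than a double-counted variant.
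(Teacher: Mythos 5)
Your proposal is correct and follows essentially the same route as the paper: square the coordinatewise bound of Lemma~\ref{lemma: second_large_eigenvalue}, sum over $j$, apply the degree-sum identity $\sum_{j}\tilde{d}_j=2m+n$, and take the square root. The only divergence is in the informal concluding clause, where the paper additionally invokes Lemma~\ref{lemma: spectralgap} to relate $\lambda_2$ to the average degree and thereby distinguish large-scale dense graphs from small sparse ones, whereas you justify the ``large-scale'' aspect only through the growth of $2m+n$ and the monotone dependence on $\tilde{d}_i$ --- a minor interpretive omission, not a gap in the derivation of the inequality.
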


\begin{proof}
\label{prood: high-bias propagation}
    To consider the impact of each node on the others separately, let $\pi_0=e_i$, where $e_i$ is a one-hot vector with the $i$-th component equal to 1. 
    According to~\cite{chung1997SGT}, we have Lemma~\ref{lemma: second_large_eigenvalue}.

    Eq.~(\ref{eq: convergence rate with eigenvalue and degree}) shows $(\mathbf{P}^ke_i)_j$ symbols the $j$-th component of $\mathbf{P}^ke_i$, where the $k$-step propagation started from node $i$.
    We denote $\mathbf{P}^ke_i$ as $\pi_i(k)$, then have the following total convergence rate variations of node $i$

\begin{equation}
\label{eq: smoothing difference in proof}
    \begin{aligned}
    &\left|\left|\tilde{\pi}-\pi_i(k)\right|\right|_2^2=\sum_{j=1}^n\left(\tilde{\pi}_j-\pi_i\left(k\right)_j\right)^2\le\frac{\sum_{j=1}^n\tilde{d}_j}{\tilde{d}_i}\lambda_2^{2k}\\
    &\left|\left|\tilde{\pi}-\pi_i(k)\right|\right|_2\le\sqrt{\frac{2m+n}{\tilde{d}_i}\lambda_2^{2k}}=\sqrt{\frac{2m+n}{\tilde{d}_i}}\lambda_2^k,
    \end{aligned}
\end{equation}

    where $m$ and $n$ represent the number of edges and nodes.
    The above inequality indicates that the factors influencing the convergence rate of propagation include the degree of the current node $i$ denoted as $\tilde{d}_i$, the second largest eigenvalue $\lambda$ determined by the propagation operator, and the number of propagation step $k$.

    In addition to $k$, the first influencing factor $\tilde{d}_i$ is determined by the degree of the current node $i$.
    Since $\tilde{d}_i\ge 1$ (with self-loop), it has minimal influence on the upper bound of the convergence rate for Low-Deg.
    In contrast, $\tilde{d}_i$ is directly associated with the densely connected communities (i.e., High-Deg). 
    This explains the greater stability of the Low-Deg shown in Fig.~\ref{fig: empirical analysis exp_high_low} compared to the High-Deg.
    Then, we delve into an in-depth analysis of $\lambda_2$, narrowing our focus to the large-scale graphs.
    According to~\cite{chung1997SGT}, we have Lemma~\ref{lemma: spectralgap}.

    The spectral gap $\Delta_\lambda$ denotes the difference between the magnitudes of the two largest eigenvalues of the propagation operator $\mathbf{P}$, where $\lambda_1=1$. 
    Therefore, the sparse graphs (i.e., small-scale Cora) with a small value of $d_{\mathcal{G}}$ result in a relatively large value of $\lambda_2$, indicating a faster convergence rate. 
    Contrastingly, dense graphs (i.e., large-scale ogbn-products) with a large value of $d_{\mathcal{G}}$ yield a smaller value of $\lambda_2$, presenting a unique convergence challenge.
    Building upon this, we have determined that the key to achieving a delicate equilibrium between efficient convergence and mitigating over-smoothing resides within the High-Deg in large-scale graphs.

\end{proof}

    To improve convergence efficiency in large-scale scenarios, we can tackle the problem from two perspectives (excluding $k$): (i) decreasing $\tilde{d}_i$ and (ii) amplifying $\lambda_2$. 
    Fortunately, we found that by appropriately reducing the degrees of High-Deg—thereby eliminating redundant connections—we can achieve both goals concurrently while reducing the computational costs of existing scalable GNNs.

\noindent\textbf{Masking for Correction.}
    From a \textit{structure-aware} perspective, we analyze the global graph propagation through Theorem~\ref{theorem: high-bias propagation} and find that encoding deep graph structural information of High-Deg within large-scale graphs presents difficulties, which leads to a struggled trade-off between effective convergence diameters and over-smoothing.
    To break these limitations, formally, we sample a subset of nodes $\tilde{\mathcal{V}} \subset \mathcal{V}$  and mask a certain percentage of their one-hop connections with a mask token [MASK], i.e., topology indicator vector $\mathbb{I}_{[M]} \in \mathbb{R}^n$ with $\theta$-based node selection threshold. 
    Thus, the corrected topology $\left[{\mathbf{A}}_u\right]$ of node $u$ can be defined as:

\begin{equation}
\label{eq: high-bais propagation correction}
    \begin{aligned}
    \left[{\mathbf{A}}_u\right]= \begin{cases}\mathbb{I}_{[M]}\odot{\mathbf{A}}_u & u \in \tilde{\mathcal{V}} \\ {\mathbf{A}}_u & u \notin \tilde{\mathcal{V}}\end{cases}.
    \end{aligned}
\end{equation}

    Furthermore, from a \textit{feature-oriented} perspective, unlike high-resolution images and rich texts in CV and NLP, graph learning often involves sparsely informative node features (e.g., one-hot vectors). 
    In large-scale graphs, disrupted homophily assumptions of High-Deg caused by intricate topology lead to the connected neighbors diverging from the current node. 
    Consequently, High-Deg struggle to maintain their uniqueness during heterophilous message aggregation. 
    Fortunately, Eq.~(\ref{eq: high-bais propagation correction}) enhances the robustness of High-Deg by regularizing the connection of misleading messages.

\begin{algorithm}[t]
\caption{Adaptive Topology-aware Propagation} 
\label{alg: atp}
\begin{algorithmic}[1] 
\REQUIRE ~~ 
    \!\!Graph $\mathcal{G}$,
    mask ratio $M$,
    threshold $\theta$,
    hyperparameters $C, \epsilon$;
\ENSURE ~~ 
    Node-wise propagation operator $\tilde{\Pi}$
    \STATE Select an appropriate $\theta$ by the truncation of $\epsilon$-based inequality $\left|\left|\tilde{\pi}-\pi_i(k)\right|\right|_2\le\epsilon$ or handcraft manner;
    \STATE Correct the high-bias propagation by Eq.~(\ref{eq: high-bais propagation correction});
    \STATE Calculate the centrality-based LNC by Eq.~(\ref{eq: weight-free LNC encoding centrality degree dg}) and Eq.~(\ref{eq: weight-free LNC encoding centrality degree ev})
    \STATE Calculate the connectivity-based LNC encoding by Eq.~(\ref{eq: weight-free LNC encoding cluster});
    \STATE Get node-wise propagation kernel coefficients by Eq.~(\ref{eq: weight-free node-adaptive r});
    \STATE Obtain ATP propagation equation by Eq.~(\ref{eq: weight-free node-adaptive r});
    
\end{algorithmic}
\end{algorithm}

\subsection{Weight-free LNC Encoding}
\label{sec: Weight-free LNC Encoding}
    Based on the Sec.~\ref{sec: introdcution}, we highlight the influence of LNC on predictions.
    A natural solution is position encoding~\cite{dwivedi2020PE1,kreuzer2021PE2,chen2022PE5}, which helps GNNs additionally incorporate node positions.
    However, such segregated encoding method could inadvertently lead to misaligned learning objectives (i.e., positions and classifications), impacting the expressive capacity of GNNs. 
    Although graph transformers~\cite{chen2020GT1,jin2020GT2,ma2023agt} can mitigate this, they introduce extra computational costs, particularly dealing with web-scale graphs.
    Further analysis can be found in Sec.~\ref{sec: Performance Comparison} and Appendix~\ref{appendix: Weighted Aggregation and LNC Encoding}-\ref{appendix: Graph Transformers and GCNII-based ATP}.

    Motivated by the Fig.~\ref{fig: empirical analysis exp_high_low}, different operators, guided by the propagation kernel coefficient $r$, capture LNC from different perspectives.
    Specifically, Low-Deg requires smaller $r$ to avoid unnecessary normalization during aggregation, acquiring more knowledge from neighbors. 
    High-Deg benefit from relatively larger $r$, enhancing their capacity to discern neighbors.
    Building upon these insights, we propose weight-free LNC encoding, which employs centrality and connectivity measures to encode node \textit{positions} and \textit{local topological structure} in a weight-free manner. 
    Remarkably, this strategy seamlessly integrates into \textit{feature}-oriented graph propagation equations and coexists harmoniously with existing NP optimization strategies.
    Given an undirected graph, the general node-adaptive propagation kernel coefficients can be formulated as diagonal $\mathbf{R}=\sum_{k=1}^K \alpha_k \mathbf{P}^k \mathbf{R}_0$, where $\mathbf{P}$ is the iteration matrix and $\mathbf{R}_0$ is the initial coefficients.
    We use $K=1$ and high-bias propagation optimized $\mathbf{P}=\left[\mathbf{D}\right]$ by default.
    
    \textbf{Centrality-based Position Encoding.}
    In our implementation, we employ degree and eigenvector centrality for encoding node \textit{{ positions}} in the graph.
    In terms of degree-based position encoding, nodes at the center of the network (i.e., High-Deg) indicate higher influence during propagation corresponding to larger $\tilde{r}$, where $\tilde{r}$ is the optimized propagation kernel coefficient $r$. 
\begin{equation}
\vspace{-0.2cm}
\label{eq: weight-free LNC encoding centrality degree dg}
    \begin{aligned}
    \operatorname{Degree}\left(\alpha_1={1}, \mathbf{R}_0=\operatorname{Diag}\left(\frac{1}{n-1},\dots,\frac{1}{n-1}\right)\right):=\\
    \mathbf{R}_{dg}=\alpha_1\cdot\left[\mathbf{D}\right]\cdot\mathbf{R}_0=\operatorname{Diag}\left(\frac{[d]_1}{n-1},\dots,\frac{[d]_n}{n-1}\right).
    \end{aligned}
\vspace{0.2cm}
\end{equation}
    For eigenvector-based position encoding, a node's centrality depends on its neighbors, which presents a unique spectral node \textit{positions} in the topology.
    This implies that High-Deg within densely connected communities possess higher influence, yielding larger $\tilde{r}$. 
\begin{equation}
\label{eq: weight-free LNC encoding centrality degree ev}
\vspace{-0.2cm}
    \begin{aligned}
    \operatorname{Eigenvector}\left(\alpha_1 = 1/\lambda_{\max}, \mathbf{P}=\left[\mathbf{A}\right], \mathbf{R}_0=\left(\mathbf{u}_{11},\dots,\mathbf{u}_{1n}\right)\right):=\\
    \mathbf{R}_{ev}\!=\operatorname{Diag}\left(\alpha_1\!\cdot\left[\mathbf{A}\right]\cdot\!\mathbf{R}_0\right)\!=\!\operatorname{Diag}\left(\frac{1}{\lambda_{\max}}\!\cdot\!\left[\mathbf{A}\right]\cdot\left(\mathbf{u}_{11},\dots,\mathbf{u}_{1n}\right)\right),
    \end{aligned}
\vspace{0.1cm}
\end{equation}
    where the vector $\mathbf{R}_0$ is the eigenvector corresponding to the largest eigenvalue $\lambda_{\max}$ of the optimized adjacency matrix $\left[\mathbf{A}\right]$.
    Based on the $\mathbf{R}_{ev}$, High-Deg pulls $\tilde{r}-1$ closer to 0 to discern neighbors for message aggregation, while Low-Deg pushes $\tilde{r}-1$ towards -1 to acquire more neighbor knowledge.
    According to $\hat{\mathbf{D}}^{\tilde{r}-1}\hat{\mathbf{A}}\hat{\mathbf{D}}^{-\tilde{r}}$ from Eq.(\ref{eq:node_wise_propagation}), these trends satisfy the observations outlined in Sec.\ref{sec: introdcution}.
    
    As widely recognized, efficiently performing accurate eigendecomposition on web-scale graphs remains an open problem. 
    However, we have opted to include $\mathbf{R}_{ev}$ as a component in our position encoding strategy. 
    This choice stems from the fact that eigenvectors serve as spectral representations of nodes within the topology, offering a precise depiction of a node's \textit{position}. 
    Furthermore, we can leverage numerical linear algebra techniques to rapidly approximate solutions with error guaranteed~\cite{parlett1979fast_eigen1,simon1984fast_eigen2,stewart1999fast_eigen3}. 
    Hence, under affordable computational overhead, we propose to utilize both $\mathbf{R}_{dg}$ and $\mathbf{R}_{ev}$ to further improve performance. 
    Alternatively, if computational constraints arise, selecting solely degree-based position encoding remains a viable option.
    We further discuss this in Sec.~\ref{sec: Experiments}.

    \textbf{Connectivity-based Local Topological Structure Encoding.}
    After that, ATP represents the \textit{{local topological structure}} of each node in the graph, which closely intertwines with the connectivity of their neighbors and determines the unique propagation rules. 
    In other words, this reveals the localized connectivity patterns, where stronger connectivity corresponds to larger $\tilde{r}$, implying more consideration of the intricate neighbors, and vice versa.
    For instance, in social networks, nodes often form cohesive groups characterized by a notably dense interconnection of ties. 
    This tendency is usually higher than the average probability of a random node pair~\cite{holland1971local_connectivity1, watts1998local_connectivity2}.
    Therefore, we utilize local cluster connectivity with $\alpha_1=\mathbb{I}(\mathcal{N})$ to encode this local topological structure for each node in the graph, 
\vspace{-0.4cm}

\begin{equation}
\label{eq: weight-free LNC encoding cluster}
    \begin{aligned}
    \operatorname{Cluster}\left(\mathbf{R}_0=\operatorname{Diag}\left(\frac{1}{[d]_1([d]_1-1)},\dots,\frac{1}{[d]_n([d]_n-1)}\right)\right):=\\
    \mathbf{R}_{cu}=\alpha_1\cdot\left[\mathbf{D}\right]\cdot\mathbf{R}_0=\operatorname{Diag}\left(\frac{[d]_1\cdot\mathbb{I}(\mathcal{N}_1)}{[d]_1([d]_1-1)},\dots,\frac{[d]_n\cdot\mathbb{I}(\mathcal{N}_n)}{[d]_n([d]_n-1)}\right),
    \end{aligned}
\end{equation}
    where $\mathcal{N}_i$ denotes the one-hop neighbors of $i$ and indicator vectors $\mathbb{I}\left(\mathcal{N}_i\right)$ is used to compute the neighborhood connectivity of $i$, i.e., $\mathbb{I}\left(\mathcal{N}_i\right)=2\left|e_{jk}\right|$ if $v_j,v_k\in\mathcal{N}_i,e_{jk}\in\mathcal{E}$ and $\mathbb{I}\left(\mathcal{N}_i\right)=0$ otherwise.
    
    \textbf{Node-adaptive Propagation Kernel.}
    After that, we obtain the optimized propagation kernel coefficient, which is denoted as a diagonal matrix $\tilde{\mathbf{R}}\in\mathbb{R}^{n\times n}$.
    Building upon this, the formal representation of node-wise propagation equation $\tilde{\mathbf{\Pi}}$ through weight-free LNC encoding combined with Eq.~(\ref{eq:node_wise_propagation}) is as follows

\begin{equation}
\label{eq: weight-free node-adaptive r}
    \begin{aligned}
    &\tilde{\mathbf{R}}=C\cdot\left(\mathbf{R}_{dg} + \mathbf{R}_{ev} + \mathbf{R}_{cu}\right),\\
    \tilde{\mathbf{\Pi}}=&\sum_{i=0}^l w_i\cdot\left(\left[\hat{\mathbf{D}}\right]^{\tilde{\mathbf{R}}-\mathbf{1}}\left[\hat{\mathbf{A}}\right]\left[\hat{\mathbf{D}}\right]^{-\tilde{\mathbf{R}}}\right)^i,
    \end{aligned}
\end{equation} 
    where $C$ is the normalization factor, $\left[\hat{\mathbf{A}}\right]$ is the topology with self-loop after high-bias propagation correction, and $\left[\hat{\mathbf{D}}\right]$ is the corresponding degree matrix.
    Remarkably, $\tilde{\mathbf{\Pi}}$ can be seamlessly integrated into any GNN dependent on graph propagation equations (e.g., message-passing mechanisms) while maintaining orthogonality with existing NP strategies (independent of $\mathbf{L}$, $\mathbf{H}$, and $\mathbf{X}$).
    Furthermore, due to ATP directly optimizing the $\tilde{\mathbf{\Pi}}$, its positive impact on decouple-based scalable GNNs is particularly pronounced.



\begin{table*}[]
\setlength{\abovecaptionskip}{0.3cm}
\caption{Model performance.
The blue and red colors are the improvement of small- and large-scale datasets from ATP.
}
\footnotesize 
\label{tab: full_batch_sampling_cmp}
\resizebox{\linewidth}{27mm}{
\setlength{\tabcolsep}{1mm}{
\begin{tabular}{ccccccccccccc}
\midrule[0.3pt]
Type                                                                           & Models          & \textcolor{blue}{Cora}     & \textcolor{blue}{CiteSeer} & \textcolor{blue}{PubMed}   &  \textcolor{blue}{Photo} & \textcolor{blue}{Computer}& \textcolor{blue}{CS} & \textcolor{blue}{Physics}& \textcolor{red}{arxiv}& \textcolor{red}{products}& \textcolor{red}{papers100M} & \textbf{Improv.}                                                                  \\ \midrule[0.3pt]
\multirow{6}{*}{\begin{tabular}[c]{@{}c@{}}Full-batch \\ GNNs\end{tabular}}    & GCN             & 81.8±0.5 & 70.8±0.5 & 79.6±0.4 & 91.2±0.6                                               & 82.4±0.4                                                  & 90.7±0.2                                              & 92.4±0.8                                                   & 71.9±0.2                                             & 76.6±0.2                                                & OOM                                                       & \multirow{2}{*}{\begin{tabular}[c]{@{}c@{}}\textcolor{blue}{$\uparrow$1.86$\%$}\\ \textcolor{red}{$\Uparrow$4.22$\%$}\end{tabular}} \\
                                                                               & GCN+ATP         & 83.7±0.4 & 72.6±0.5 & 81.0±0.3 & 92.6±0.5                                               & 83.8±0.4                                                  & 92.2±0.2                                              & 93.9±0.7                                                   & 74.5±0.2                                             & 80.3±0.2                                                & OOM                                                       &                                                                              \\
                                                                               & GCNII           & 83.2±0.5 & 72.0±0.6 & 79.8±0.4 & 91.5±0.8                                               & 82.6±0.5                                                  & 91.0±0.3                                              & 92.8±1.2                                                   & 72.7±0.3                                             & 79.4±0.4                                                & OOM                                                       & \multirow{2}{*}{\begin{tabular}[c]{@{}c@{}}\textcolor{blue}{$\uparrow$1.71$\%$}\\ \textcolor{red}{$\Uparrow$4.45$\%$}\end{tabular}} \\
                                                                               & GCNII+ATP       & \textbf{84.6±0.6} & 73.2±0.5 & \textbf{81.6±0.5} & \textbf{92.8±0.7}                                               & 83.8±0.4                                                  & \textbf{92.8±0.2}                                              & \textbf{94.3±1.0}                                                   & 75.4±0.2                                             & 83.5±0.3                                                & OOM                                                       &                                                                              \\
                                                                               & *GAT            & 82.2±0.7 & 71.3±0.7 & 79.4±0.5 & 91.0±0.8                                               & 81.8±0.5                                                  & 90.2±0.3                                              & 91.8±1.0                                                   & 71.5±0.1                                             & OOM                                                & OOM                                                       & -                                                                            \\
                                                                               & *GATv2          & 82.8±0.8 & 71.5±0.8 & 79.3±0.4 & 91.5±0.6                                               & 82.5±0.5                                                  & 91.3±0.4                                              & 92.2±1.1                                                   & 72.8±0.2                                             & OOM                                                & OOM                                                       & -                                                                            \\ \midrule[0.3pt]
\multirow{7}{*}{\begin{tabular}[c]{@{}c@{}}Sampling\\ GNNs\end{tabular}} & *GraphSAGE      & 81.0±0.6 & 70.5±0.7 & 79.2±0.6 & 89.7±0.8                                               & 81.2±0.6                                                  & 90.5±0.4                                              & 91.5±1.0                                                   & 71.3±0.4                                             & 78.5±0.1                                                & 65.1±0.2                                                  & -                                                                            \\
                                                                               & Cluster-GCN     & 81.6±0.5 & 71.1±0.6 & 79.3±0.4 & 90.8±0.7                                               & 82.2±0.5                                                  & 90.8±0.3                                              & 91.8±1.1                                                   & 71.5±0.3                                             & 78.8±0.2                                                & OOM                                                       & \multirow{2}{*}{\begin{tabular}[c]{@{}c@{}}\textcolor{blue}{$\uparrow$1.94$\%$}\\ \textcolor{red}{$\Uparrow$5.07$\%$}\end{tabular}} \\
                                                                               & Cluster-GCN+ATP & 83.4±0.6 & \textbf{73.5±0.5} & 80.7±0.5 & 92.3±0.6                                               & 83.8±0.5                                                  & 92.0±0.3                                              & 93.4±1.0                                                   & 74.4±0.3                                             & 83.6±0.2                                                & 66.4±0.2                                                  &                                                                              \\
                                                                               & GraphSAINT      & 81.3±0.5 & 71.5±0.6 & 79.3±0.5 & 90.5±0.8                                               & 81.6±0.5                                                  & 90.4±0.3                                              & 92.0±1.2                                                   & 71.9±0.3                                             & 80.3±0.3                                                & OOM                                                       & \multirow{2}{*}{\begin{tabular}[c]{@{}c@{}}\textcolor{blue}{$\uparrow$2.06$\%$}\\ \textcolor{red}{$\Uparrow$4.20$\%$}\end{tabular}} \\
                                                                               & GraphSAINT+ATP  & 83.5±0.5 & 73.3±0.7 & 81.0±0.4 & 92.0±0.7                                               & 83.6±0.6                                                  & 91.8±0.2                                              & 93.6±1.0                                                   & 74.9±0.2                                             & 83.7±0.4                                                & 67.2±0.2                                                  &                                                                              \\
                                                                               & ShaDow          & 81.4±0.7 & 71.6±0.5 & 79.6±0.5 & 90.8±0.9                                               & 82.0±0.6                                                  & 91.0±0.3                                              & 92.2±1.0                                                   & 72.1±0.2                                             & 80.6±0.1                                                & 67.1±0.2                                                  & \multirow{2}{*}{\begin{tabular}[c]{@{}c@{}}\textcolor{blue}{$\uparrow$2.14$\%$}\\ \textcolor{red}{$\Uparrow$4.38$\%$}\end{tabular}} \\
                                                                               & ShaDow+ATP      & 83.8±0.8 & 73.4±0.6 & 81.3±0.5 & 92.4±0.8                                               & \textbf{84.0±0.5}                                                  & 92.5±0.3                                              & 93.6±0.8                                                   & \textbf{75.8±0.2}                                             & \textbf{84.8±0.2}                                                & \textbf{69.8±0.1}                                                  &                                                                              \\ \midrule[0.3pt]
\end{tabular}
}}
\end{table*}

\begin{table}[]
\setlength{\abovecaptionskip}{0.2cm}
\setlength{\belowcaptionskip}{-0.2cm}
\caption{Model performance on decoupled GNNs.
}
\label{tab: decoupled_exp}
\resizebox{\linewidth}{32mm}{
\setlength{\tabcolsep}{1.2mm}{
\begin{tabular}{c|cccc}
\midrule[0.1pt]
Models     & arxiv&prodcuts& papers100M& Improv.                                                             \\ \midrule[0.1pt]
SGC        & 71.84±0.26                                             & 75.92±0.07                                                & 64.38±0.15                                                  & \multirow{2}{*}{\textcolor{red}{$\Uparrow$4.48$\%$}} \\
SGC+ATP    & 74.47±0.21                                             & 82.06±0.10                                                & 67.25±0.12                                                  &                                                                         \\ \midrule[0.1pt]
APPNP      & 72.34±0.24                                             & 78.84±0.09                                                & 65.26±0.18                                                  & \multirow{2}{*}{\textcolor{red}{$\Uparrow$4.51$\%$}} \\
APPNP+ATP  & 75.16±0.27                                             & 83.58±0.12                                                & 69.33±0.15                                                  &                                                                         \\ \midrule[0.1pt]
PPRGo      & 72.01±0.18                                             & 78.45±0.16                                                & 65.87±0.20                                                  & \multirow{2}{*}{\textcolor{red}{$\Uparrow$4.60$\%$}} \\
PPRGo+ATP  & 74.56±0.24                                             & 83.88±0.12                                                & 69.45±0.16                                                  &                                                                         \\ \midrule[0.1pt]
GBP        & 72.13±0.25                                             & 78.49±0.15                                                & 64.10±0.18                                                  & \multirow{2}{*}{\textcolor{red}{$\Uparrow$4.30$\%$}} \\
GBP+ATP    & 74.96±0.22                                             & 83.66±0.20                                                & 68.78±0.12                                                  &                                                                         \\ \midrule[0.1pt]
AGP        & 72.45±0.20                                             & 78.34±0.13                                                & 65.53±0.15                                                  & \multirow{2}{*}{\textcolor{red}{$\Uparrow$4.55$\%$}} \\
AGP+ATP    & 75.08±0.16                                             & 83.58±0.16                                                & 69.16±0.18                                                  &                                                                         \\ \midrule[0.1pt]
GRAND+     & 73.86±0.28                                             & 79.55±0.20                                                & 66.86±0.17                                                  & \multirow{2}{*}{\textcolor{red}{$\Uparrow$4.24$\%$}} \\
GRAND++ATP & 75.69±0.25                                             & 84.70±0.14                                                & 70.27±0.24                                                  &                                                                         \\ \midrule[0.1pt]
\end{tabular}
}}
\end{table}

\begin{table}[]
\setlength{\abovecaptionskip}{0.2cm}
\setlength{\belowcaptionskip}{-0.2cm}
\caption{Model performance with NP optimization strategies.
}
\label{tab: np_exp}
\resizebox{\linewidth}{31mm}{
\setlength{\tabcolsep}{1.3mm}{
\begin{tabular}{c|cccc}
\midrule[0.3pt]
Model          & products& papers100M& Flickr             & Reddit             \\ \midrule[0.1pt]
SIGN           & 79.26±0.1                                              & 65.34±0.2                                                & 52.46±0.1          & 93.41±0.0          \\
SIGN+DGC       & 82.16±0.2                                              & 67.84±0.2                                                & 53.32±0.1          & 94.92±0.1          \\
SIGN+NDLS      & 81.92±0.1                                              & 68.10±0.1                                                & 53.74±0.1          & 94.58±0.0          \\
SIGN+NDM       & 82.48±0.2                                              & 68.45±0.1                                                & 53.95±0.1          & 95.32±0.1          \\
SIGN+SCARA     & 82.20±0.2                                              & 67.91±0.2                                                & \textbf{54.18±0.2} & 94.64±0.1          \\
SIGN+ATP       & \textbf{83.65±0.1}                                     & \textbf{68.70±0.2}                                       & 54.06±0.1          & \textbf{95.54±0.0} \\
SIGN$\star$    & \cellcolor[HTML]{d0cfcf}83.95±0.2                                              & \cellcolor[HTML]{d0cfcf}69.24±0.2                                                & \cellcolor[HTML]{d0cfcf}54.83±0.2          & \cellcolor[HTML]{d0cfcf}96.08±0.1          \\ \midrule[0.3pt]
S$^2$GC        & 78.84±0.1                                              & 65.15±0.1                                                & 52.10±0.1          & 92.14±0.0          \\
S$^2$GC+DGC    & 81.75±0.1                                              & 67.42±0.2                                                & 53.24±0.1          & 94.22±0.1          \\
S$^2$GC+NDLS   & 82.18±0.2                                              & 67.86±0.1                                                & 53.68±0.1          & 94.10±0.1          \\
S$^2$GC+NDM    & 82.84±0.2                                              & \textbf{68.20±0.2}                                       & 54.02±0.2          & 94.86±0.1          \\
S$^2$GC+SCARA  & 82.76±0.2                                              & 68.04±0.2                                                & 54.25±0.1          & 94.57±0.1          \\
S$^2$GC+ATP    & \textbf{82.32±0.1}                                     & 68.10±0.1                                                & \textbf{54.48±0.1} & \textbf{95.28±0.0} \\
S$^2$GC$\star$ & \cellcolor[HTML]{d0cfcf}83.68±0.2                                              & \cellcolor[HTML]{d0cfcf}68.87±0.2                                                & \cellcolor[HTML]{d0cfcf}55.16±0.2          & \cellcolor[HTML]{d0cfcf}96.18±0.1          \\ \midrule[0.3pt]
\end{tabular}
}}
\vspace{-0.2cm}
\end{table}

\begin{table}[]
\setlength{\abovecaptionskip}{0.2cm}
\setlength{\belowcaptionskip}{-0.2cm}
\caption{Ablation on transductive and inductive settings.
}
\label{tab:ab_exp}
\resizebox{\linewidth}{36mm}{
\setlength{\tabcolsep}{1.5mm}{
\begin{tabular}{c|cccc}
\midrule[0.3pt]
Model      & arxiv    & products & Flickr   & Reddit   \\ \midrule[0.3pt]
GCNII      & 72.74±0.3 & 79.43±0.4 & 53.11±0.1 & 93.65±0.1 \\
GCNII+ATP  & 75.42±0.2 & 83.51±0.3 & 53.96±0.1 & 95.04±0.0 \\
w/o HPC    & 75.04±0.4 & 82.97±0.4 & 53.68±0.2 & 94.76±0.1 \\
w/o Eigen  & 74.83±0.3 & 82.72±0.3 & 53.55±0.1 & 94.65±0.0 \\
w/o LNC    & 73.66±0.2 & 80.63±0.2 & 53.42±0.1 & 94.14±0.0 \\ \midrule[0.3pt]
ShaDow     & 72.13±0.2 & 80.64±0.3 & 52.71±0.2 & 94.10±0.0 \\
ShaDow+ATP & 75.84±0.2 & 84.80±0.2 & 53.80±0.1 & 95.49±0.0 \\
w/o HPC    & 75.37±0.3 & 84.04±0.2 & 53.48±0.2 & 95.02±0.1 \\
w/o Eigen  & 75.04±0.2 & 83.85±0.3 & 53.28±0.1 & 94.84±0.1 \\
w/o LNC    & 73.03±0.2 & 81.53±0.2 & 52.95±0.1 & 94.41±0.0 \\ \midrule[0.3pt]
GAMLP      & 73.43±0.3 & 81.41±0.2 & 53.86±0.2 & 94.25±0.1 \\
GAMLP+ATP  & 76.22±0.2 & 85.64±0.2 & 55.64±0.1 & 95.88±0.0 \\
w/o HPC    & 75.73±0.3 & 84.96±0.3 & 54.85±0.2 & 95.50±0.1 \\
w/o Eigen  & 75.69±0.2 & 84.85±0.3 & 54.47±0.1 & 95.33±0.1 \\
w/o LNC    & 74.23±0.2 & 82.64±0.2 & 54.05±0.1 & 94.86±0.0 \\ \midrule[0.3pt]
\end{tabular}
}}
\vspace{-0.5cm}
\end{table}

\section{Experiments}
\label{sec: Experiments}
    In this section, we first introduce experimental setups, including datasets, baselines, and environments. 
    Due to space constraints, we provide additional experimental details and evaluation discussion in the Appendix~\cite{ATP_appendix}.
    We aim to answer the following questions to verify the effectiveness of our proposed ATP: 
    \textbf{Q1}: How does ATP perform in improving backbones? 
    Meanwhile, can ATP coexist harmoniously with existing NP optimization strategies?
    \textbf{Q2}: If ATP is effective, what contributes to its performance gain for backbones?
    \textbf{Q3}: If we insert ATP into the backbone, how does it affect the running efficiency?
    \textbf{Q4}: Compared to other NP optimization strategies, how does ATP perform when applied to sparse web-scale graphs?

\subsection{Experimental Setup}
\textbf{Datasets.}
    We evaluate the performance of ATP under both transductive and inductive settings. 
    Due to space constraints, the statistics and description details are summarized in Appendix~\ref{appendix: Dataset Description}.

\noindent\textbf{Baselines.}
    We conduct experiments using the following backbone GNNs:
    (i) Representative full-batch GNNs: GCN, GAT, GCNII, GATv2.
    (ii) Sampling-based GNNs: GraphSAGE, Cluster-GCN, GraphSAINT, ShaDow.
    (iii) Decouple-based GNNs: SGC, APPNP, PPRGo, GBP, SIGN, S$^2$GC, AGP, GAMLP, GRAND+.
    Based on this, we compare ATP with existing NP optimization strategies, including DGC, NDLS, NDM, and SCARA. 
    To alleviate the randomness and ensure a fair comparison, we repeat each experiment 10 times for unbiased performance.
    Unless otherwise stated, we adopt GAMLP as the backbone and eigenvector-based LNC.

\noindent\textbf{Hyperparameter Settings.}
    The hyperparameters in the backbone GNNs and NP optimization strategies are set according to the original paper if available.
    Otherwise, we perform a hyperparameter search via the Optuna~\cite{akiba2019optuna}.
    For our proposed ATP, we explore the optimized $\theta$ for masking mechanisms in a handcrafted manner, which contains the selection ratios in all degree-ranked connected densely nodes (Top-1\%-20\%) and the sampling ratio range for other relatively sparse nodes is 0-0.5.
    The mask token $[M]$ and the normalization factor $C$ are explored within the ranges of 0 to 1.

\subsection{Performance Comparison}
\label{sec: Performance Comparison}
    \textbf{Backbone Improvement.}
    To answer \textbf{Q1}, we present ATP's optimization results for backbones in Tables~\ref{tab: full_batch_sampling_cmp} and~\ref{tab: decoupled_exp}. 
    The highlighted improvements demonstrate the impressive performance of ATP as a plug-and-play NP optimization strategy.
    Building upon this, we observe that ATP's performance improvement is more pronounced in large-scale graphs compared to small-scale graphs. 
    This is attributed to the fact that in large-scale graphs, ATP's propagation correction strategy masks more potential high-bias edges, and LNC encoding allows for finer-grained exploration of intricate topology.

    \noindent\textbf{Compared to Weighted Aggregation.}
    In Table~\ref{tab: full_batch_sampling_cmp}, GraphSAGE, GAT, and GATv2 adopt well-known attention mechanisms for weighted message aggregation (marked with *). 
    This node-pairs attention strategy is an additional alternative solution to the graph propagation equation (i.e., $\Pi$ in Eq.~(\ref{eq:node_wise_propagation})), making ATP cannot coexist with this learnable aggregation strategy. 
    It's worth noting that although these attention-based approaches intuitively have the potential for better predictive performance, their limited receptive fields due to first-order aggregation and the modeling complexity imposed by intricate topologies often restrict their competitive performance and scalability when dealing with web-scale graphs (i.e., out-of-memory (OOM) error).
    Further detailed discussions about attention methods and ATP can be found in Appendix~\ref{appendix: Weighted Aggregation and LNC Encoding}-\ref{appendix: Graph Transformers and GCNII-based ATP}.

    \noindent\textbf{Compared to Existing NP Optimization Strategies.}
    To answer \textbf{Q1} from the perspective of generalizability, we provide performance gains brought by different NP optimization strategies for backbones in Table~\ref{tab: np_exp} under both transductive and inductive settings. 
    We observe that ATP consistently produces competitive results in the context of large-scale graph learning, thereby validating the claims made in Sec.~\ref{sec: introdcution} that integrating high-bias propagation correction and LNC encoding can improve the comprehension of intricate topologies.
    Meanwhile, SIGN$\star$ and S$^2$GC$\star$ represent the best results of integrating ATP with SCARA and NDM optimization strategies by Eq.~(\ref{eq:node_wise_propagation}). 
    We observe impressive results in their combination, validating that ATP coexists harmoniously with existing methods.

\subsection{Ablation Study and In-depth Analysis}
    To answer \textbf{Q2}, we investigate the contributions of high-bias propagation correction (HPC), LNC encoding (LNC), and eigenvector-based LNC (Eigen) to ATP, which is shown in Table.~\ref{tab:ab_exp}.

\textbf{High-bias Propagation Correction.}
    For HPC, it is applied to reduce potential high-bias propagation through masking mechanisms. 
    Its primary goal is to improve running efficiency, reflected in performance gains and reduced computational costs (see Sec.~\ref{sec: High-bias Propagation Correction}). 
    Therefore, HPC not only achieves an average improvement of 0.48\% but also offers a solution for enhancing model scalability.
    For instance, in Table~\ref{tab: full_batch_sampling_cmp}, HPC makes Cluster-GCN and GraphSAINT trainable on ogbn-papers100M.
    More details can be found in Sec.~\ref{sec: Running Efficiency in Large-scale Graphs}.
    
    Building upon this, we further analyze HPC by the selection ratio of High-Deg for masking in Fig.~\ref{fig: exp_hpc_ratio}. 
    The experimental results indicate that as the masking rate increases from Top-1\%, there is a consistent improvement in performance.
    In most cases, we suggest that select nodes with degrees in the Top-10\%-15\% of the degree ranking (from high to low) for masking. 
    Excessive masking nodes may have a negative impact on predictions due to broken topology.
    More details can be found in Appendix~\ref{appendix: Mask in High-bias Propagation Correction}.

\textbf{Local Node Context Encoding.}
    As mentioned in Sec.\ref{sec: introdcution}, we aim to customize propagation rules for each node in large-scale graphs with intricate topologies, while adhering to the $\Pi$ in Eq.(\ref{eq:node_wise_propagation}). 
    The key insight is to focus on LNC composed of node \textit{features}, \textit{positions} in the graph, and \textit{local topological structure}, as it possesses unique prompts that aid the model in node-level classification downstream task. 
    Building upon this foundation, we conduct comprehensive ablation experiments to evaluate the effectiveness of weight-free LNC encoding in ATP.
    Experimental results in Table~\ref{tab:ab_exp} confirm our claims, for instance, LNC helps improve sampling-based ShaDow's predictive performance on large-scale ogbn-products from 81.53 to 84.80. 
    Moreover, Eigen, as a fine-grained position encoding in the spectral domain, plays a significant role in performance gains. 
    Therefore, we suggest incorporating Eigen as part of LNC encoding, with acceptable additional computational overhead (see Sec.~\ref{sec: Weight-free LNC Encoding}).

    To further validate the effectiveness of LNC, we provide experimental results in Fig.~\ref{fig: exp_lnc_r} using different propagation kernel coefficients. 
    We observe that, in ogbn-papers100M consisting more High-Deg, larger values of $r$ yield better results in general. 
    Conversely, in Flickr, smaller values of $r$ are recommended. 
    In both cases, the advantage of ATP-LNC is significant.
    Specifically, ATP employs node-adaptive LNC encoding to capture topological distinctions between nodes situated in different densities, thereby guiding the NP process and achieving significant predictive performance.
\begin{figure}[t]   
	\centering
	\includegraphics[width=\linewidth,scale=1.00]{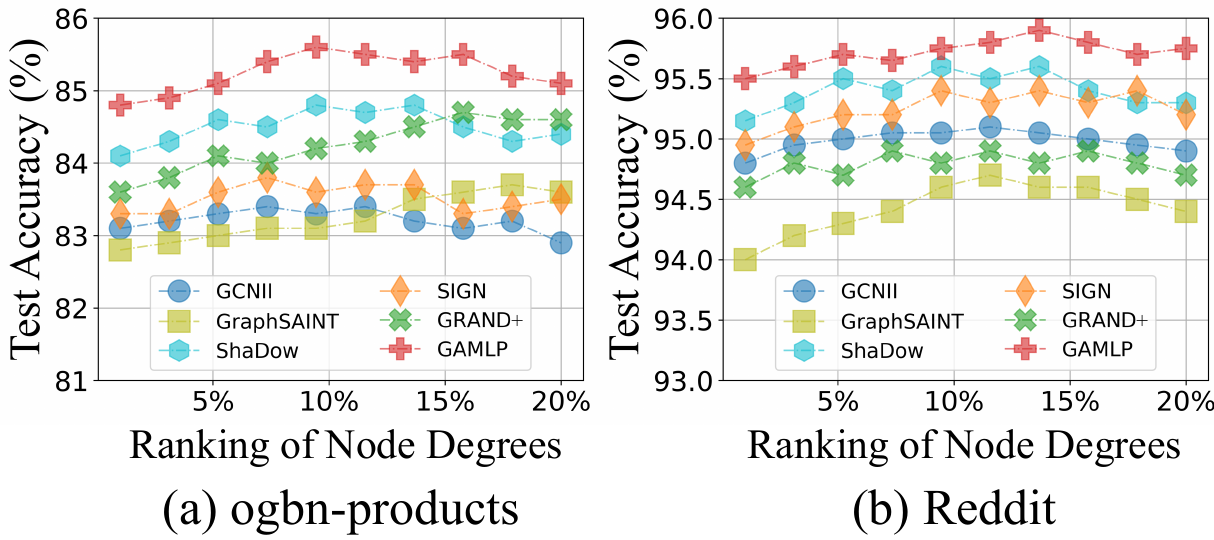}
	\caption{
    Predictive performance optimized by ATP.
    }
    \label{fig: exp_hpc_ratio}
\end{figure}

\begin{figure}[t]   
	\centering
	\includegraphics[width=\linewidth,scale=1.00]{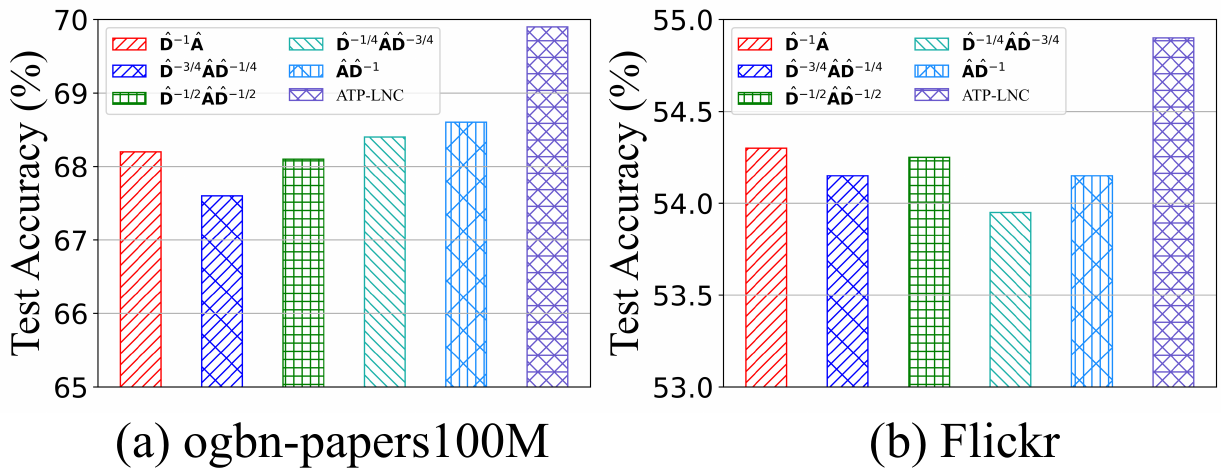}
	\caption{
    Predictive performance with different kernels.
    }
    \label{fig: exp_lnc_r}
\end{figure}

\begin{figure}[t]   
	\centering
	\includegraphics[width=\linewidth,scale=1.00]{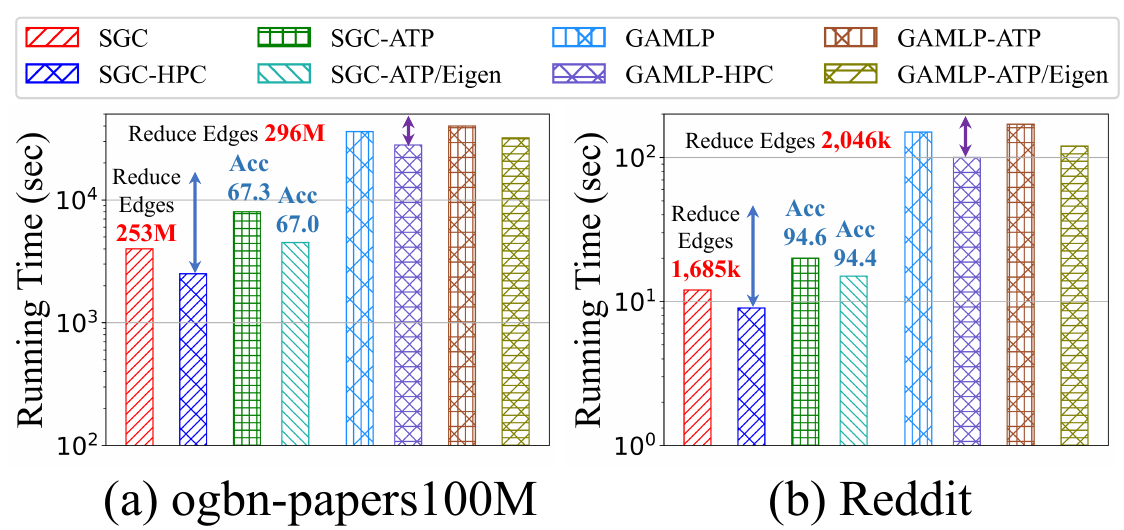}
	\caption{
    Running times on large-scale graphs.
    }
    \label{fig: exp_efficient}
\end{figure}

\subsection{Running Efficiency in Large-scale Graphs}
\label{sec: Running Efficiency in Large-scale Graphs}
    To answer \textbf{Q3}, we present a visualization illustrating the running efficiency for ATP in Fig.~\ref{fig: exp_efficient}, which encompasses both topology-related pre-processing and model training time, the red text corresponds to the reduction in the number of edges achieved by HPC, while the blue text represents the performance influenced by Eigen.
    Based on this, we draw the following conclusions:
    (i) HPC improves running efficiency by directly reducing potential high-bias edges.
    For instance, in SGC, HPC reduces edges by 15.3\% and 14.5\% on ogbn-papers100M and Reddit.
    However, for GAMLP, the optimized proportion of masked edges increases to 18.7\% and 17.6\%. 
    This is because GAMLP reduces its reliance on intricate topologies by propagation attention. 
    (ii) While LNC introduces additional pre-processing overhead, when not using Eigen, HPC further optimizes the running efficiency for computationally complex scalable GNNs such as GAMLP.
    Remarkably, lightweight LNC encoding strategies (i.e., without Eigen) still exhibit robustness and competitive results (67.3\%-67.0\% and 94.6\%-94.4\%) as a plug-and-play approach. 
    
    The above observations highlight that ATP can strike a balance between model running efficiency and predictive performance through $\theta$-based masking mechanisms and selective LNC encoding strategies. 
    This observation strongly underscores the exceptional scalability of ATP and its ability to handle web-scale graphs.

\begin{figure}[t]   
	\centering
	\includegraphics[width=\linewidth,scale=1.00]{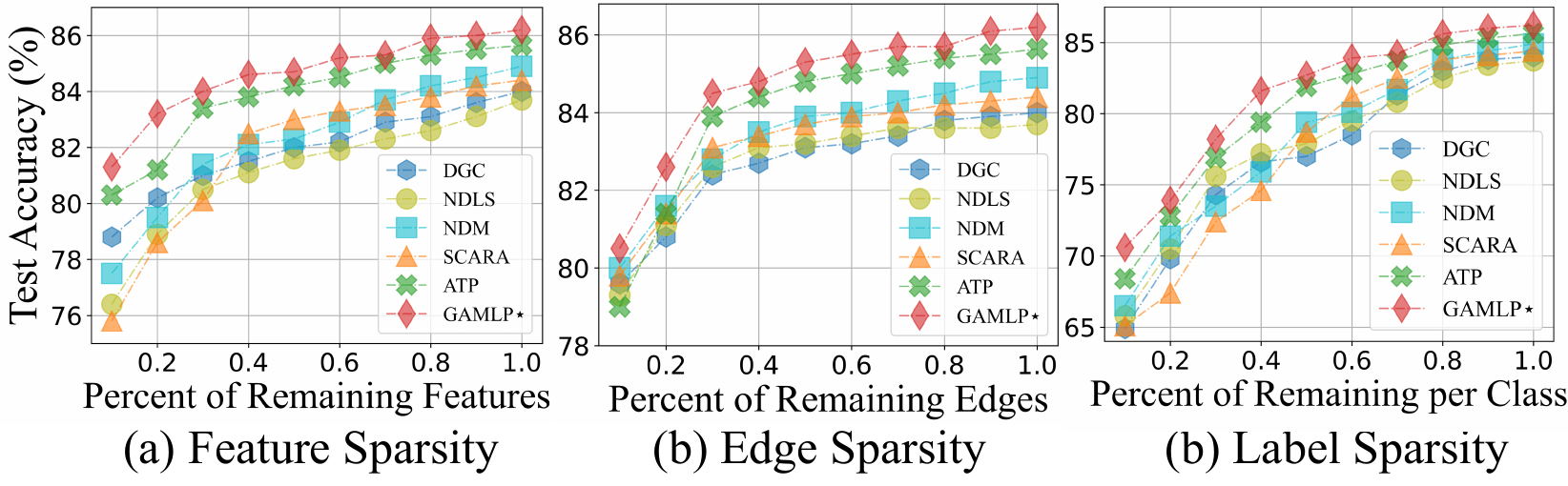}
	\caption{
    Sparsity performance on ogb-products.
    }
    \label{fig: exp_sparsity}
\end{figure}

\subsection{Performance under Sparse Graphs}
    To answer \textbf{Q4}, the experimental results are presented in Fig.~\ref{fig: exp_sparsity}.
    For stimulating feature sparsity, we assume that the feature representation of unlabeled nodes is partially missing. 
    In this case, NP optimization strategies that rely on node representations like NDLS and feature-push operations like SCARA are severely compromised. 
    Conversely, methods based on topology like heat diffusion such as DGC and NDM, along with LNC encoding, exhibit robustness.
    To simulate edge sparsity, we randomly remove a fixed percentage of edges.
    Notably, since all NP optimization strategies rely on the topology to custom propagation rules, their performance is not optimistic under the edge sparsity setting. 
    However, we observe that ATP quickly recovers and exhibits with leading performance.
    For stimulating label sparsity, we change the number of labeled samples for each class and acquire the testing results with a similar trend as the feature-sparsity tests.
    Furthermore, the performance of GAMLP$\star$ in Fig.~\ref{fig: exp_sparsity} once again demonstrates the positive coexistence effect when seamlessly integrating our proposed ATP into other optimization methods. 
    Therefore, ATP comprehensively enhances both the performance and robustness of the original backbone.

\section{Conclusion}
    In this paper, we first provide a valuable empirical study that reveals the uniqueness of intricate topology in web-scale graphs.
    Then, we propose ATP, a plug-and-play NP optimization strategy that can be seamlessly integrated into most GNNs to improve running efficiency, reflected in performance gains and lower costs. 

    ATP aims to address scalability and adaptability challenges encountered by existing GNNs when being implemented in complex web-scale graphs with intricate topologies. 
    To further improve performance, finer-grained HPC can be considered, such as identifying edges (i.e., homophily or heterophily). 
    Discovering isomorphism- and kernel-based LNC encoding are promising directions as well.

\vspace{0.1cm}

\begin{acks}
 This work was partially supported by 
 (I) the National Key Research and Development Program of China 2020AAA0108503, 
 (II) NSFC Grants U2241211, 62072034, and
 (III) High-performance Computing Platform of Peking University.
 Rong-Hua Li is the corresponding author of this paper.
\end{acks}

\newpage
\bibliographystyle{ACM-Reference-Format}
\balance
\bibliography{sample-base}

\clearpage
\appendix
\section{Outline}
\begin{description}

    \item[A.1] Dataset Description
    \item[A.2] Compared Baselines
    \item[A.3] Experiment Environment
    \item[A.4] Weighted Aggregation and LNC Encoding
    \item[A.5] Graph Transformers and GCNII-based ATP
    \item[A.6] Masks in High-bias Propagation Correction
    \item[A.7] Experiments for Comprehensive Evaluation
    
\end{description}

\begin{table*}[htbp]
\setlength{\abovecaptionskip}{0.2cm}
\caption{The statistical information of the experimental datasets.
}
\footnotesize 
\label{tab: datasets}
\resizebox{\linewidth}{28.5mm}{
\setlength{\tabcolsep}{2mm}{
\begin{tabular}{cccccccc}
\midrule[0.3pt]
Dataset          & \#Nodes     & \#Features & \#Edges       & \#Classes & \#Train/Val/Test      & \#Task      & Description         \\ \midrule[0.3pt]
Cora             & 2,708       & 1,433      & 5,429         & 7         & 140/500/1000          & Transductive & citation network    \\
CiteSeer         & 3,327       & 3,703      & 4,732         & 6         & 120/500/1000          & Transductive & citation network    \\
PubMed           & 19,717      & 500        & 44,338        & 3         & 60/500/1000           & Transductive & citation network    \\ \midrule[0.3pt]
Amazon Photo     & 7,487       & 745        & 119,043       & 8         & 160/240/7,087         & Transductive & co-purchase graph   \\
Amazon Computer  & 13,381      & 767        & 245,778       & 10        & 200/300/12,881        & Transductive & co-purchase graph   \\
Coauthor CS      & 18,333      & 6,805      & 81,894        & 15        & 300/450/17,583        & Transductive & co-authorship graph \\
Coauthor Physics & 34,493      & 8,415      & 247,962       & 5         & 100/150/34,243        & Transductive & co-authorship graph \\ \midrule[0.3pt]
ogbn-arxiv       & 169,343     & 128        & 2,315,598     & 40        & 91k/30k/48k           & Transductive & citation network    \\
ogbn-products    & 2,449,029   & 100        & 61,859,140    & 47        & 196k/49k/2204k        & Transductive & co-purchase graph   \\
ogbn-papers100M  & 111,059,956 & 128        & 1,615,685,872 & 172       & 1200k/200k/146k       & Transductive & citation network    \\ \midrule[0.3pt]
Flickr           & 89,250      & 500        & 899,756       & 7         & 44k/22k/22k           & Inductive    & image network       \\
Reddit           & 232,965     & 602        & 11,606,919    & 41        & 155k/23k/54k          & Inductive    & social network      \\ \midrule[0.3pt]
\end{tabular}
}}
\end{table*}

\subsection{Dataset Description}
\label{appendix: Dataset Description}
    The description of all datasets is listed below:

    \textbf{Cora}, \textbf{CiteSeer}, and \textbf{PubMed}~\cite{Yang16cora} are three citation network datasets, where nodes represent papers and edges represent citation relationships. 
    The node features are word vectors, where each element indicates the presence or absence of each word in the paper.

    \textbf{Coauthor CS} and \textbf{Coauthor Physics}~\cite{shchur2018_amazon_coauthor_datasets} are co-authorship graphs based on the Microsoft Academic Graph~\cite{wang2020microsoft_MAG}, where nodes are authors, edges are co-author relationships, node features represent paper keywords, and labels indicate the research field.

    \textbf{Amazon Photo} and \textbf{Amazon Computers}~\cite{shchur2018_amazon_coauthor_datasets} are segments of the Amazon co-purchase graph, where nodes represent items and edges represent that two goods are frequently bought together. 
    Given product reviews as bag-of-words node features.

    \textbf{ogbn-arxiv} and \textbf{ogbn-papers100M}~\cite{hu2020ogb} are two citation graphs indexed by MAG~\cite{wang2020microsoft_MAG}.
    Each paper involves averaging the embeddings of words in its title and abstract.
    The embeddings of individual words are computed by running the skip-gram model.

    \textbf{ogbn-products}~\cite{hu2020ogb} is a co-purchasing network, where nodes represent products and edges represent that two products are frequently bought together. 
    Node features are generated by extracting bag-of-words features from the product descriptions.
    
    \textbf{Flickr}~\cite{zeng2019graphsaint} dataset originates from the SNAP, where nodes represent images, and connected images from common properties. 
    Node features are 500-dimensional bag-of-words representations. 

    \textbf{Reddit}~\cite{hamilton2017graphsage} dataset collected from Reddit, where 50 large communities have been sampled to build a post-to-post graph, connecting posts if the same user comments on both.
    For features, off-the-shelf 300-dimensional GloVe vectors are used.

\subsection{Compared Baselines}
\label{appendix: Compared Baselines}
    The backbone GNNs used in experiments are listed below:
    
    \textbf{GCN}~\cite{kipf2016gcn} introduces a novel approach to graphs that is based on a first-order approximation of spectral convolutions on graphs.
    This approach learns hidden layer representations that encode both local graph structure and features of nodes.

    \textbf{GAT}~\cite{velivckovic2017gat} utilizes attention mechanisms to quantify the importance of neighbors for message aggregation.
    This strategy enables implicitly specifying different weights to different nodes in a neighborhood, without depending on the graph structure upfront.

    \textbf{GCNII}~\cite{chen2020gcnii} incorporates initial residual and identity mapping. Theoretical and empirical evidence is presented to demonstrate how these techniques alleviate the over-smoothing problem.

    \textbf{GATv2}~\cite{brody2021gatv2} introduces a variant with dynamic graph attention mechanisms to improve GAT.
    This strategy provides better node representation capabilities and enhanced robustness when dealing with graph structure as well as node attribute noise.
    
    \textbf{GraphSAGE}~\cite{hamilton2017graphsage} leverages neighbor node attribute information to efficiently generate representations.
    This method introduces a general inductive framework that leverages node feature information to generate node embeddings for previously unseen data. 

    \textbf{Cluster-GCN}~\cite{chiang2019clustergcn} is designed for training with stochastic gradient descent (SGD) by leveraging the graph clustering structure. 
    At each step, it samples a block of nodes that associate with a dense subgraph identified by a graph clustering algorithm, and restricts the neighborhood search within this subgraph.

    \textbf{GraphSAINT}~\cite{zeng2019graphsaint} is a inductive framework that enhances training efficiency through graph sampling.
    Each iteration, a complete GCN is built from the properly sampled subgraph, which decouples the sampling from the forward and backward propagation.

    \textbf{ShaDow}~\cite{zeng2021ShaDow} decouples the depth and scope of GNNs for informative representations in node classfication.
    This approach propose a design principle to decouple the depth and scope of GNNs – to generate representation of a target entity, where a properly extracted subgraph consists of a small number of critical neighbors, while excluding irrelevant ones. 

    \textbf{SGC}~\cite{wu2019sgc} simplifies GCN by removing non-linearities and collapsing weight matrices between consecutive layers.
    Theoretical analysis show that the simplified model corresponds to a fixed low-pass filter followed by a linear classifier.

    \textbf{APPNP}~\cite{2019appnp} leverages the connection between GCN and PageRank to develop an enhanced propagation method.
    This strategy leverages a large, adjustable neighborhood for classification and can be easily combined with any neural network.

    \textbf{PPRGo}~\cite{bojchevski2020pprgo} proposes an efficient approximation of diffusion in GNNs for substantial speed improvements and better performance.
    This approach utilizes an efficient approximation of information diffusion in GNNs resulting in significant speed gains while maintaining competitive performance.

    \textbf{SIGN}~\cite{frasca2020sign} introduces a novel, efficient, and scalable graph deep learning architecture that eliminates the need for graph sampling.
    This method sidesteps the need for graph sampling by using graph convolutional filters of different size that are amenable to efficient pre-computation, allowing extremely fast training and inference.

    \textbf{S$^2$GC}~\cite{zhu2021ssgc} introduces a modified Markov Diffusion Kernel for GCN, which strikes a balance between low- and high-pass filters to capture the global and local contexts of each node.

    \textbf{GBP}~\cite{chen2020gbp} introduces a scalable GNN that employs a localized bidirectional propagation process involving both feature vectors and the nodes involved in training and testing.
    Theoretical analysis shows that GBP is the first method that achieves sub-linear time complexity for both the pre-computation and the training phases. 

    \textbf{AGP}~\cite{wang2021agp} proposes a unified randomized algorithm capable of computing various proximity queries and facilitating propagation.
    This method provides a theoretical bounded error guarantee and runs in almost optimal time complexity. 

    \textbf{GAMLP}~\cite{gamlp} is designed to capture the inherent correlations between different scales of graph knowledge to break the limitations of the enormous size and high sparsity level of graphs hinder their applications under industrial scenarios.

    \textbf{GRAND+}~\cite{feng2022grand+} develops the generalized forward push algorithm called GFPush, which is utilized for graph augmentation in a mini-batch fashion.
    Both the low time and space complexities of GFPush enable GRAND+ to efficiently scale to large graphs.

\subsection{Experiment Environment}
\label{appendix: Experiment Environment}
    Experiments are conducted with Intel(R) Xeon(R) CPU E5-2686 v4 @ 2.30GHz, and a single NVIDIA GeForce RTX 3090 with 24GB GPU memory. 
    The operating system of the machine is Ubuntu 20.04.5 with 768GB of memory.
    As for software versions, we use Python 3.8.10, Pytorch 1.13.0, and CUDA 11.7.0.

\subsection{Weighted Aggregation and LNC Encoding}
\label{appendix: Weighted Aggregation and LNC Encoding}
    In Sec.~\ref{sec: Performance Comparison}, we provide a brief discussion of potential scalability concerns associated with the attention mechanisms in web-scale graph learning scenarios. 
    We also highlight the incompatibility of these end-to-end learnable message aggregation mechanisms with the LNC encoding introduced in ATP for optimizing propagation kernel coefficients $r$.
    To delve deeper into these statements and present a comprehensive evaluation of weighted message aggregation and LNC encoding within ATP, this section begins by clarifying the distinctions and connections between learnable attention mechanisms and graph propagation equations. 
    Then, we present visual experimental results for both ATP and end-to-end attention-based approaches, including GraphSAGE, GAT, and GATv2.

\begin{figure}[t]   
	\centering
    \setlength{\abovecaptionskip}{0.1cm}
    \setlength{\belowcaptionskip}{-0.3cm}
	\includegraphics[width=\linewidth,scale=1.00]{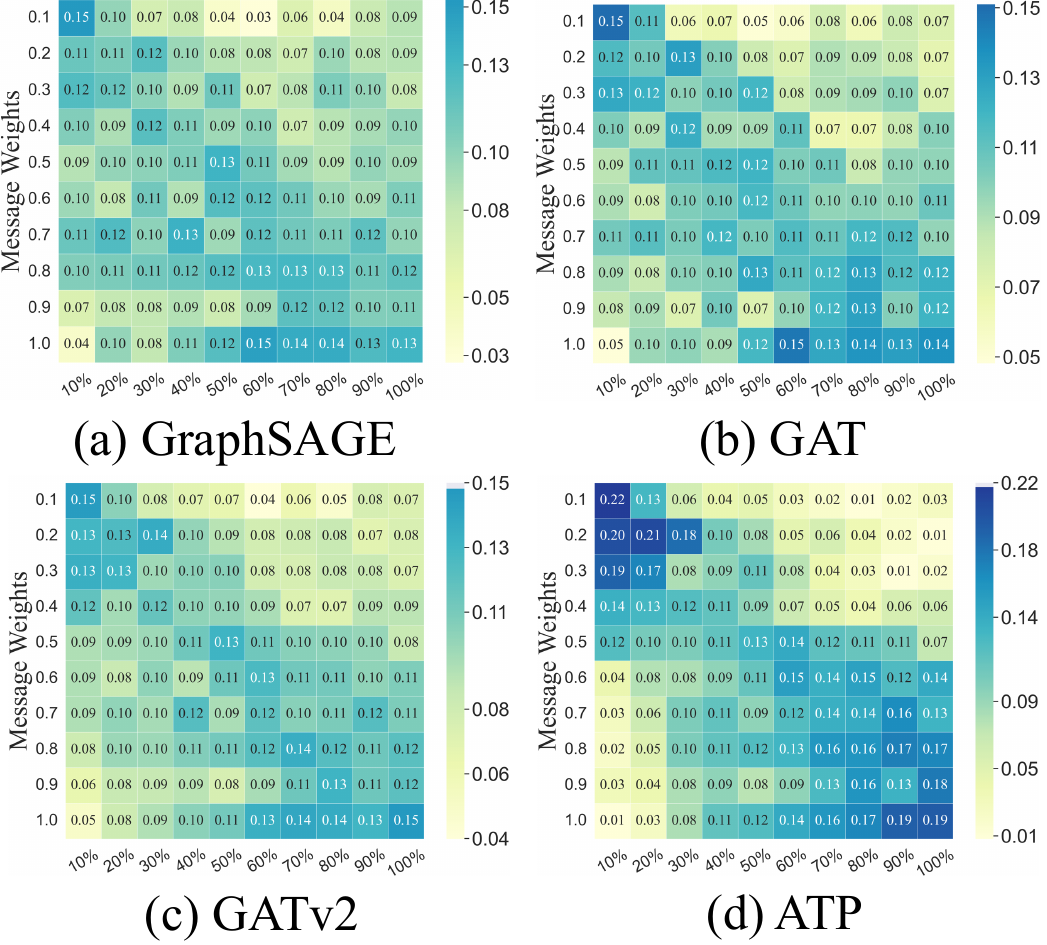}
	\caption{
     Comparison of the attention- and LNC encoding-based message aggregation weights (similar to propagation kernel coefficients) on ogbn-arxiv.
     The x-axis represents the ranking of node degrees from low to high order.
    }
    \label{fig: exp_heat_attention}
\end{figure}

\textbf{Graph Attention Mechanisms.}
    To improve the predictive performance of the current node $i$, GraphSAGE proposes to explicitly consider its first-order neighbors, denoted as $j\in\mathcal{N}_i$.
    Specifically, during the message aggregation, GraphSAGE treats all neighbors with equal importance (indiscriminated aggregation), with a key aspect being the combination of aggregated messages using an end-to-end learnable mechanism. 
    This can be formalized as follows
    \begin{equation}
    \label{eq: graphsage}
        \begin{aligned}
        \mathbf{X}_u =\operatorname{Aggregate}\left( \mathbf{W},\mathbf{X}_u, \left\{\mathbf{X}_v, \forall v \in \mathcal{N}(u)\right\}\right).
        \end{aligned}
    \end{equation}
    To achieve fine-grained message aggregation for each node, GAT employs a $d$-dimension embedding-based learnable scoring function with trainable $\mathbf{a}$, $\mathbf{W}$, denoted as $e:\mathbb{R}_q^d\times\mathbb{R}_v^d\to\mathbb{R}$, to obtain the attention score $\alpha$ of each "key" neighbor in generating representations for the current "query" node (i.e., attention mechanism). 
    \begin{equation}
    \label{eq: gat_attention_score}
        \begin{aligned}
        &\;\;\;e\left(\mathbf{X}_i, \mathbf{X}_j\right)=\operatorname{LeakyReLU}\left(\mathbf{a}^{\top} \cdot\left[\mathbf{W} \mathbf{X}_i \| \mathbf{W} \mathbf{X}_j\right]\right),\\
        &\alpha_{i j}=\operatorname{softmax}\left(e\left(\mathbf{X}_i, \mathbf{X}_j\right)\right)=\frac{\exp \left(e\left(\mathbf{X}_i, \mathbf{X}_j\right)\right)}{\sum_{j \in \mathcal{N}_i} \exp \left(e\left(\mathbf{X}_i, \mathbf{X}_{j^{\prime}}\right)\right)}.
        \end{aligned}
    \end{equation}
    Then, GAT takes into account neighbor messages with varying scores when generating representations for the current node.
    \begin{equation}
    \label{eq: gat_embedding}
        \begin{aligned}
        \mathbf{X}_u=\sigma\left(\sum_{j \in \mathcal{N}_i} \alpha_{i j} \cdot \mathbf{W} \mathbf{X}_j\right).
        \end{aligned}
    \end{equation}
    Due to the globally shared learnable parameters in the $e$, different "query" node embeddings $\mathbf{X}_u\in\mathbb{R}_q^d$ will yield the same score ranking list in extreme scenarios (e.g., complete bipartite graphs). 
    In other words, GAT may disproportionately focus on a fixed "key" neighbor (i.e., static attention), which contradicts the original intention of flexible attention composition. 
    Building upon this observation, GATv2 modifies the order of importance scores computation to achieve a more expressive graph attention mechanism.
    \begin{equation}
    \label{eq: garv2_embedding}
        \begin{aligned}
        e\left(\mathbf{X}_i, \mathbf{X}_j\right)=\mathbf{a}^{\top} \operatorname{LeakyReLU}\left(\mathbf{W} \cdot\left[\mathbf{X}_i \| \mathbf{X}_j\right]\right).
        \end{aligned}
    \end{equation}

    Reviewing graph attention, we find that their optimization can also be derived from a node-wise perspective. 
    For instance, GAT aims to identify neighbors during aggregation, while GATv2 adopts fine-grained attention score modeling. 
    Fundamentally, graph attention represents a specific instance within the broader context of graph propagation equations, customizing message aggregation strategies for each node in an end-to-end learnable manner (i.e., node-pair based $w$ in Eq.~(\ref{eq:node_wise_propagation})). 
    Intuitively, graph attention is effective, but the intricate topology in web-scale graphs brings unique challenges.
    According to Table~\ref{tab: full_batch_sampling_cmp}, it is evident that current attention mechanisms struggle to maintain effectiveness and consistency, let alone provide the scalability required for large-scale graph learning.
    To illustrate this issue, we provide the following visual analysis.

    \textbf{Visual Analysis.}
    We report the visual results in Fig.~\ref{fig: exp_heat_attention}, where the x-axis indicates the node set with degrees within the Top 10\% of the ranking from low to high order and the heat map value represents the percentage of nodes in the set that have achieved the correspondent message weights.
    For ATP, the y-axis is the node-adaptive $r$. 
    For others, we first train each model and then obtain the average attention score $\alpha$ for each node in first-order aggregation.
    
    Building upon this, we draw the following conclusions:
    (1) The performance of ATP aligns with the key intuition derived from our empirical study in Sec.\ref{sec: introdcution}. 
    Specifically, nodes with smaller degrees tend to have smaller $r$, leading to a more inclusive knowledge acquisition from their neighbors, while nodes with larger degrees have larger $r$ to enable fine-grained discrimination of neighbors.
    (2) Progressing from GraphSAGE to GATv2, we observe that their optimization objectives align with the pre-process in ATP. 
    However, as previously emphasized, when faced with intricate topologies in web-scale graphs, existing methods cannot fully capture potential structural patterns through learnable mechanisms. 
    The lack of distinct color differentiations leads to their sub-optimal performance.

\subsection{Graph Transformers and GCNII-based ATP}
\label{appendix: Graph Transformers and GCNII-based ATP}
    We commence by revisiting graph transformer and graph attention from the perspective of the \textit{attention} mechanism. 
    Then, we elucidate the distinctions between ATP and graph transformer. With experimental analysis, we discuss if ATP is preferred to perform graph propagation on web-scale graphs comparing graph transformer.
    
    \textbf{Graph Transformer Mechanisms.}
    From a self-attention perspective, the graph attention mechanism calculates only the first-order neighbors of the current node, while the graph transformer considers all nodes within the graph. 
    From a message aggregation perspective, graph transformer correspond to fully connected dense graphs, while graph attention corresponds to a relatively sparse graph. 
    In terms of structural encoding, graph transformers provide the model with high-dimensional global structural positional priors, whereas graph attention focuses more on the local neighbors.
    
    Specifically, transformers consist of multiple transformer layers, with each comprising a self-attention module and a feed-forward network (FFN). 
    Considering the $l$-th transformer layer, the input features $\mathbf{H}^{(l-1)} \in$ $\mathbb{R}^{N \times d}$ (where $\mathbf{H}^{(0)}=\mathbf{X}^{(0)}$) are initially transformed using three weight matrices $\mathbf{W}^Q \in \mathbb{R}^{d \times d_Q}, \mathbf{W}^K \in \mathbb{R}^{d \times d_K}, \mathbf{W}^V \in \mathbb{R}^{d \times d_V}$ to generate the corresponding query, key, and value matrices $Q, K, V \in \mathbb{R}^{N \times d}$. 
    For simplicity, we assume that $d=d_K=d_Q=d_V$. The formulation of the transformer layer is then as follows:
    \begin{equation}
    \label{eq: transformer}
        \begin{aligned}
    \mathbf{Q}=\mathbf{H}^{(l-1)} \mathbf{W}^Q, \mathbf{K}=\mathbf{H}^{(l-1)} \mathbf{W}^K, \mathbf{V}=\mathbf{H}^{(l-1)} \mathbf{W}^V \\
    \mathbf{B}^{(l)}=\frac{\mathbf{Q K}^{\top}}{\sqrt{d}}, \quad \mathbf{H}^{(l)}=\mathrm{FFN}\left(\operatorname{softmax}\left(\mathbf{B}^{(l)}\right) \mathbf{V}\right).
        \end{aligned}
    \end{equation}
    Graph transformers have significantly advanced the field of graph learning by addressing fundamental limitations inherent in GNNs, such as over-smoothing. 
    Building upon this, graph transformers empower nodes to incorporate information from any other nodes in the graph, thereby overcoming the constraints of the limited receptive field. 
    In other words, the fundamental concept behind graph transformers is to incorporate structural information into the transformer architecture in a learnable manner, facilitating node predictions. 

    \textbf{Related Works.}
    To compare the performance improvement brought by ATP to GCNII with models based on the graph transformer architecture, we summarize the key characteristics of representative graph transformers proposed in recent years as follows
    
    {LSPE}~\cite{dwivedi2021lspe} proposes a generic architecture to decouple node attributes and topology in a learnable manner for better performance.
    This method proposes to decouple structural and positional representations to learn these two essential properties. 
    
    {Graphormer}~\cite{ying2021Graphormer} utilizes node degree and neighborhood-based spatial centrality to combine additional topological structure information in the learnable message aggregation process.
    
    {Gophormer}~\cite{zhao2021gophormer} utilizes well-designed sampled ego-graphs, introduces a proximity-enhanced transformer mechanism to capture structural biases for better aggregation.
    Meanwhile, this strategy considers the stability in training and testing.
    
    {LiteGT}~\cite{chen2021litegt} introduces an efficient graph transformer architecture that incorporates sampling strategies and a multi-channel transformer mechanism with kernels for better performance.
    
    {SAT}~\cite{chen2022sat} employs various graph learning models to extract correlated structural information within the current node's neighborhood, including utilize graph Laplacian eigenvectors-based encoding mechanism to improve transformer architectures. 
    
    {AGT}~\cite{ma2023agt} consists of a learnable centrality encoding strategy and a kernelized local structure encoding mechanism to extract structural patterns from the centrality and subgraph views to improve node representations for the node-level downstream tasks. 
    
    {NAGphormer}~\cite{chen2022nagphormer} treats each node as a sequence containing a series of tokens. 
    For each node, NAGphormer aggregates the neighborhood features from different hops into different representations.

\begin{table}[]
\setlength{\abovecaptionskip}{0.2cm}
\setlength{\belowcaptionskip}{-0.2cm}
\caption{Model performance with NP optimization strategies.
}
\label{tab: exp_gt}
\resizebox{\linewidth}{24mm}{
\setlength{\tabcolsep}{1mm}{

\begin{tabular}{ccccc}
\midrule[0.3pt]
Models     & Computer           & Physics            & ogbn-arxiv         & Flickr             \\ \midrule[0.3pt]
GCNII      & 82.64±0.5          & 92.78±1.2          & 72.68±0.3          & 53.11±0.1          \\
GCNII+ATP  & 83.75±0.4 & \textbf{94.32±1.0} & \textbf{75.37±0.2} & \textbf{53.96±0.1} \\ \midrule[0.3pt]
GNN-LSPE   & 83.34±0.5          & 93.90±1.4          & 72.96±0.3          & 52.24±0.1          \\
Graphormer & 82.95±0.6          & 93.54±1.3          & 72.35±0.3          & 51.86±0.2          \\
Gophormer  & 83.10±0.5          & 93.67±1.1          & 72.60±0.2          & 52.28±0.1          \\
NAGphormer  & 83.76±0.5          & 93.85±1.2          & 73.75±0.3          & 53.40±0.2          \\
LiteGT        & 82.84±0.6          & 93.12±1.5          & 73.13±0.3          & 52.33±0.1          \\
SAT        & 83.55±0.5          & 94.12±1.2          & 73.84±0.3          & 52.57±0.1          \\
AGT        & \textbf{83.84±0.6}          & 93.88±1.1          & 73.98±0.3          & 53.24±0.2          \\ \midrule[0.3pt]
\end{tabular}

}}
\vspace{-0.2cm}
\end{table}

    \textbf{Experimental Analysis.}
    we present the predictive performance of graph transformers and GCNII-based ATP, in Table~\ref{tab: exp_gt}.
    Notably, we opt for GCNII over GCN to ensure a fair comparison, as the simple computations in GCN appear obsolete compared with well-designed transformer mechanisms.
    Based on the results, we observe that graph transformers exhibit a significant advantage on small-scale datasets such as Computer and Physics.
    This is attributed to their ability to effectively capture the simple and direct topological structures. 
    Conversely, graph transformers struggle to perform well as the dataset size grows due to the increasingly intricate topology, as evidenced by their performance on ogbn-arxiv and Flickr. 

    \textbf{Graph Attention/Transformer vs ATP.}
    Fundamentally, both graph attention and graph transformer share the core idea of achieving better message aggregation through end-to-end learnable mechanisms. 
    However, graph attention pays more attention to local neighbors, whereas graph transformers aim to encode global topology. 
    Although GATv2 and NAGphormer enhance the expressiveness of score function in graph attention and improve local representations in graph transformers, they both exhibit significant disadvantages as follows :
    (1) The representation capacity of end-to-end learnable mechanisms is contentious, especially in facing the intricate topology of web-scale graphs. 
    In other words, the debate about whether they can successfully capture the LNC of each node remains uncertain. 
    Our experiments on web-scale graphs have yielded unsatisfying results.
    (2) Complex model architectures with vast learnable parameters lead to scalability issue. 
    Although NAGphormer can be trained on ogbn-papers100M with the use of sampling and mini-batch training strategies, it remains challenging to deploy and is prone to instability. 
    It is highly sensitive to sampling results and training hyperparameters such as token size.
    
    To address these issues, we introduce LNC, which offers a comprehensive node characterization based on \textit{features}, \textit{positions} in the graph, and \textit{local topological structure}. 
    As shown in our empirical study in Sec.~\ref{sec: introdcution}, LNC reveals key insights for achieving robust node classification performance on web-scale graphs with intricate topology. 
    Building upon this, we propose ATP to improve graph propagation equations Eq.~(\ref{eq:node_wise_propagation}), which seamlessly combines node \textit{features}, \textit{positions} (from a global perspective, centrality-based position encoding similar to graph transformer), and \textit{local topological structure} (from a local perspective, connectivity-based local topological structure encoding similar to graph attention).
    Meanwhile, as a weight-free and plug-and-play strategy, ATP improves the running efficiency of the most of existing GNNs.

\vspace{0.2cm}
\subsection{Masks in High-bias Propagation Correction}
\label{appendix: Mask in High-bias Propagation Correction}
\begin{figure}[t]   
	\centering
 \setlength{\abovecaptionskip}{0.2cm}
\setlength{\belowcaptionskip}{-0.15cm}
	\includegraphics[width=\linewidth,scale=1.00]{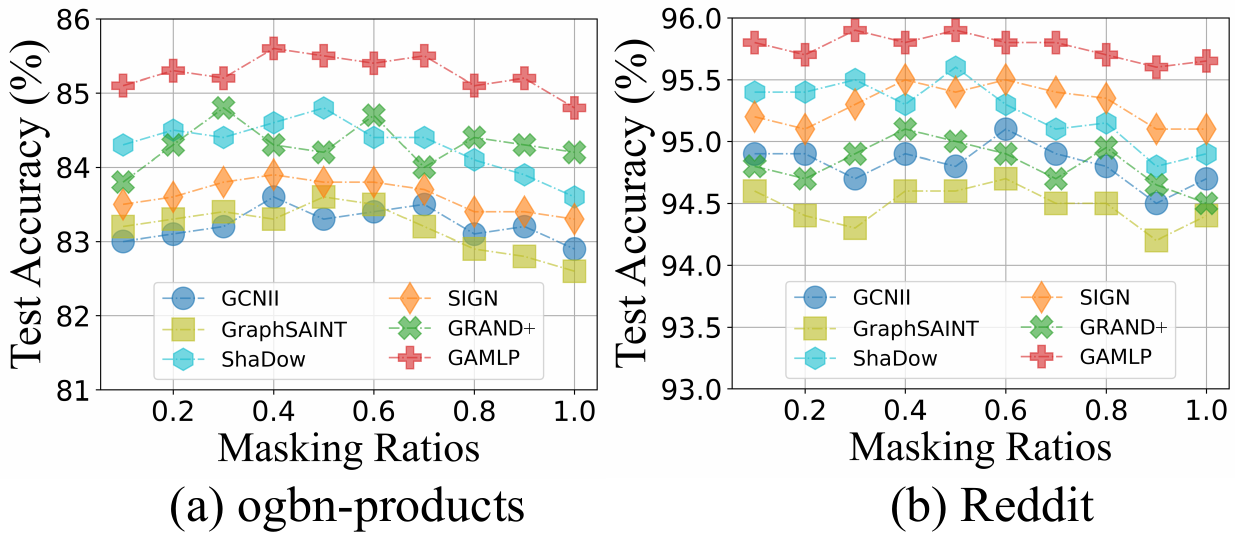}
	\caption{Performance under the influence of masking ratio.
    }
    \label{fig: exp_hpc_mask_ratio}
\end{figure}

\begin{figure}[t]   
	\centering
 \setlength{\abovecaptionskip}{0.2cm}
\setlength{\belowcaptionskip}{-0.15cm}
	\includegraphics[width=\linewidth,scale=1.00]{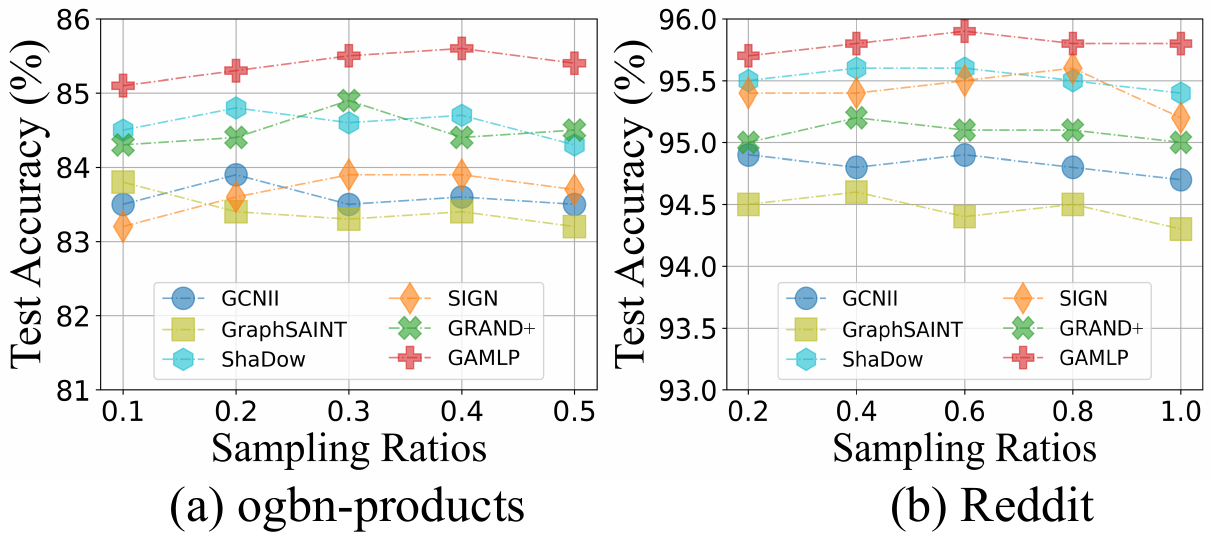}
	\caption{
Performance under the influence of sampling ratio.
    }
    \label{fig: exp_hpc_sample_ratio}
\end{figure}

    HPC first samples a subset of nodes $\tilde{\mathcal{V}}\subset\mathcal{V}$.
    The design principles for the sampling mechanism are as follows:
    (1) selecting all nodes that rank in the Top-$\theta$\% based on their degree when nodes are sorted from highest to lowest degree.
    This is to strike a balance between the optimal convergence radius and over-smoothing from a global graph propagation perspective.
    (2) Selecting partial nodes outside the above node degree rankings. 
    Specifically, for other relatively sparse nodes, we perform random sampling with a fixed sampling ratio of 0.2. 
    Similar to dropout in training process, this strategy aims to enhance the robustness of node representations from a regularization perspective while further reducing the pre-computation and training costs associated with topology.
    Subsequently, HPC applies edge masking to the node set $\tilde{\mathcal{V}}$ by the mask token $[M]$.
    It is worth noting that we have provided experimental analysis into how different values of $\theta$ impact the performance improvement brought by High-Deg selection in Fig.\ref{fig: exp_hpc_ratio}. 
    Therefore, in this section, we supplement the discussion of the effect of the edge masking ratio $[M]$ performed by HPC on predictive performance.
    
    According to the experimental results presented in Fig.~\ref{fig: exp_hpc_mask_ratio}, in the transductive setting, increasing the masking ratio from zero has an overall positive impact on predictive performance. 
    However, when the masking ratio becomes excessively high, the performance deteriorates when handling edge sparsity. 
    Furthermore, in the inductive setting, we find that lower masking ratios may have a negative effect, in stark contrast to the results in the transductive setting. 
    The observed variation arises from the inductive setting's demand for richer neighborhood knowledge in predicting unseen nodes.
    Nevertheless, as we increase the masking ratio, the benefits of eliminating potential high-bias propagation outweigh the drawbacks of reduced neighborhood knowledge. 
    In conclusion, based on the empirical analysis of the experimental results mentioned above, we recommend setting the masking ratio to 0.5, as it tends to yield optimal predictive performance in most cases.

    Similar to the conclusions drawn from Fig.\ref{fig: exp_hpc_ratio} and Fig.\ref{fig: exp_hpc_mask_ratio}, the experimental results in Fig.~\ref{fig: exp_hpc_sample_ratio} indicate that randomly sampling too many relatively sparsely connected nodes for HPC can adversely affect node prediction performance due to significant topological information gap. 
    Conversely, selecting an appropriate sampling ratio can strike a balance between mitigating potential high-bias propagation and topological gap, thereby contributing to significant performance improvements in the original backbone.

\subsection{Experiments for Comprehensive Evaluation}
\begin{table}[]
\setlength{\abovecaptionskip}{0.2cm}
\setlength{\belowcaptionskip}{-0.2cm}
\caption{FSGNN performance gain with ATP.
}
\label{tab: exp_fsgnn}
\resizebox{\linewidth}{9mm}{
\setlength{\tabcolsep}{1mm}{
\begin{tabular}{cccccc}
\midrule[0.3pt]
{Models} & {arxiv} & {products} & {100M} & {Flickr} & {Reddit} \\ \midrule[0.3pt]
FSGNN           & 73.5±0.3     & 80.8±0.2        & 66.2±0.2          & 53.4±0.2      & 94.1±0.1      \\
FSGNN+ATP       & 75.3±0.3     & 84.5±0.2        & 69.9±0.2          & 55.1±0.1      & 95.9±0.1      \\ \midrule[0.3pt]
\end{tabular}
}}

\end{table}

\begin{table}[]
\setlength{\abovecaptionskip}{0.2cm}
\setlength{\belowcaptionskip}{-0.2cm}
\caption{HPC performance with different strategies.
}
\label{tab: exp_sampling_method}
\begin{tabular}{cccc}
\midrule[0.3pt]
{Models} & {arxiv} & {products} & {papers100M} \\ \midrule[0.3pt]
SGC             & 71.84±0.26     & 75.92±0.07        & 64.38±0.15          \\
SGC+Sampling    & 71.59±0.34     & 76.14±0.15        & 64.24±0.27          \\
SGC+ATP         & 72.25±0.20     & 76.80±0.08        & 64.56±0.19          \\ \midrule[0.3pt]
APPNP           & 72.34±0.24     & 78.84±0.09        & 65.26±0.18          \\
APPNP+Sampling  & 72.25±0.40     & 78.60±0.24        & 65.14±0.26          \\
APPNP+ATP       & 72.76±0.17     & 79.46±0.11        & 65.48±0.20          \\ \midrule[0.3pt]
GRAND+          & 73.86±0.28     & 79.55±0.20        & 66.86±0.17          \\
GRAND++Sampling & 73.58±0.46     & 79.47±0.36        & 66.70±0.28          \\
GRAND++ATP      & 74.10±0.22     & 80.07±0.14        & 67.12±0.16          \\ \midrule[0.3pt]
\end{tabular}
\end{table}
    
\begin{table}[]
\setlength{\abovecaptionskip}{0.2cm}
\setlength{\belowcaptionskip}{-0.2cm}
\caption{Efficiency results for sampling-based methods.
}
\label{tab: exp_time_report}
\begin{tabular}{ccc}
\midrule[0.3pt]
{Models} & {products} & {Reddit} \\ \midrule[0.3pt]
GraphSAGE       & 2736s              & 70.2s            \\
Cluster-GCN     & 2183s              & 62.7s            \\
GraphSAINT      & 1892s              & 51.5s            \\
ShaDow          & 1624s              & 46.8s            \\
GAMLP+ATP       & 1974s              & 49.3s            \\
GAMLP+ATP/Eigen & 1480s              & 41.6s            \\ \midrule[0.3pt]
\end{tabular}
\end{table}

\begin{table}[]
\setlength{\abovecaptionskip}{0.2cm}
\setlength{\belowcaptionskip}{-0.2cm}
\caption{Performance with different LNC strategies.
}
\label{tab: exp_pagerank_katz}
\begin{tabular}{ccc}
\midrule[0.3pt]
{Model}     & {arxiv} & {products} \\ \midrule[0.3pt]
GAMLP              & 73.43±0.32     & 81.41±0.22        \\
+ ATP/E            & 75.69±0.30     & 84.85±0.28        \\
+ ATP/E w PageRank & 75.64±0.39     & 84.96±0.25        \\
+ ATP/E w Katz     & 75.78±0.36     & 84.90±0.34        \\
+ ATP/E w both     & 75.84±0.29     & 85.12±0.26        \\ \midrule[0.3pt]
\end{tabular}
\end{table}

    It is worth noting that, in addition to the baseline backbone models compared in Sec.~\ref{sec: Performance Comparison}, FSGNN~\cite{2021fsgnn} stands out as a recently proposed scalable GNN model based on a decoupling strategy. 
    Leveraging Soft-Selector and Hop-Normalization, FSGNN constructs robust neural predictors from the perspective of node features, exhibiting significant advantages in node-level prediction tasks. 
    To further explore whether ATP can enhance the performance of FSGNN, we present its experimental results on transductive and inductive datasets in Table~\ref{tab: exp_fsgnn}.
    Based on these findings, we consistently observe that ATP enhances the performance of FSGNN across these 5 different datasets. 
    This improvement stems from the fine-grained propagation results achieved by ATP, which effectively decouple feature generation and representation learning within FSGNN.

    In our proposed ATP, the HPC module adopts a targeted regularization approach focusing on High-Deg. 
    Although GraphSAGE employs random neighborhood sampling akin to our approach, the key disparity lies in ATP's regularization objective centered on High-Deg, while GraphSAGE uniformly conducts dropout. 
    To elucidate these distinctions and provide a comprehensive evaluation, we present experimental results in Table~\ref{tab: exp_sampling_method}.
    In contrast to GraphSAGE's uniform neighborhood sampling strategy at each layer, which disregards node degree discrepancies and applies identical sampling proportions to all nodes, ATP incorporates tailored propagation rules based on node connectivity. 
    As demonstrated in our theoretical analysis in Sec.~\ref{sec: High-bias Propagation Correction}, customized propagation rules should prioritize High-Deg. 
    However, GraphSAGE's uniform approach leads to inconsistent performance across experiments, falling short of our proposed method.
    Furthermore, GraphSAGE exhibits higher variance due to its inability to consistently produce high-quality sampling outcomes across random runs. 
    Conversely, ATP leverages insights from our theoretical analysis to adapt propagation rules for High-Deg, resulting in enhanced consistency and predictions.

    As discussed in Sec.~\ref{sec: Weight-free LNC Encoding}, the eigenvector-based position encoding is designed to exploit the spectral domain perspective, offering a more precise description of the node's LNC.
    However, it entails spectral decomposition, which may incur computational overheads.
    While approximate estimation methods can partially alleviate these costs, they still represent a non-negligible computational overhead. 
    Therefore, we proceed to introduce feasible alternative solutions in the subsequent sections.
    It is noteworthy that we present an intuitive efficiency analysis in Table~\ref{tab: exp_time_report} to further substantiate our viewpoint.
    Alternatively, for minimizing computational overheads while optimizing NP, we suggest exploring alternative designs that enhance spatial domain encoding of local context. 
    Utilizing PageRank or Katz centrality based on degree-based position encoding presents viable alternatives to eigenvector-based position encoding. 
    These methods offer lower computational costs compared to eigenvector centrality. 
    However, they share fundamental similarities with degree-based position encoding, thereby limiting the improvements they can provide over using ATP alone.
    Detailed experimental results are shown in Table~\ref{tab: exp_pagerank_katz}.

\end{document}